\documentclass[acmsmall, screen]{acmart}

\AtBeginDocument{%
	}


\usepackage{amsmath,amsfonts,amsthm}
\usepackage{graphicx}
\usepackage{textcomp}
\usepackage{enumerate}
\usepackage{wrapfig}
\usepackage{float}

\usepackage{comment}
\usepackage{comment}
\usepackage{listings,color}
\usepackage{multirow}
\usepackage{balance}
\usepackage{url}
\usepackage{array,color,colortbl,graphicx,multirow}
\usepackage{subcaption}
\usepackage{caption}
\usepackage[toc,page]{appendix}
\usepackage{makecell}
\usepackage{url}
\usepackage{nicematrix}
\usepackage{mathtools}
\usepackage{tikz}
\usetikzlibrary{shapes}
\usetikzlibrary{decorations.markings,shadows, shapes}
\usepackage[linesnumbered,ruled,vlined]{algorithm2e}
\SetKwInput{KwData}{Input}
\SetKwInput{KwResult}{Output}
\SetKwComment{Comment}{/* }{ */}
\usepackage{wrapfig}
\usepackage{enumitem}

\usepackage{etoolbox}
\usepackage{tikz}
\usetikzlibrary{patterns, positioning,shapes.misc,matrix, arrows.meta, calc} %
\usepackage[most]{tcolorbox}  
\usepackage{xcolor} 
\usepackage{siunitx}

\newcommand{\name}{Aurora\xspace}
\newcommand{\para}[1]{\noindent{\textbf{#1}}\:}
\newcommand{\moe}{MoE\xspace}
\newcommand{\bomp}{bottleneck matching problem\xspace}

\newcommand{\uGbps}[1]{\SI{#1}{Gbps}}

\author{Jialong Li, Shreyansh Tripathi, Lakshay Rastogi, Yiming Lei, Rui Pan, \\Yiting Xia}

\authorsaddresses{}

\settopmatter{printacmref=false, printccs=false, printfolios=false}
\renewcommand\footnotetextcopyrightpermission[1]{} 
\pagestyle{plain} 

\begin{document}

\title[\name]{Optimizing Mixture-of-Experts Inference Time Combining Model Deployment and Communication Scheduling}

\begin{abstract}

As machine learning models scale in size and complexity, their computational requirements become a significant barrier. Mixture-of-Experts (\moe) models alleviate this issue by selectively activating relevant experts. Despite this, \moe models are hindered by high communication overhead from all-to-all operations, low GPU utilization due to the synchronous communication constraint, and complications from heterogeneous GPU environments.

This paper presents \name, which optimizes both model deployment and all-to-all communication scheduling to address these challenges in \moe inference. \name achieves minimal communication times by strategically ordering token transmissions in all-to-all communications. It improves GPU utilization by colocating experts from different models on the same device, avoiding the
limitations of synchronous all-to-all communication.
We analyze \name's optimization strategies theoretically across four common GPU cluster settings:
exclusive vs. colocated models on GPUs, and homogeneous vs. heterogeneous GPUs.
\name provides optimal solutions for three cases, and for the remaining NP-hard scenario, it offers a polynomial-time sub-optimal solution with only a 1.07$\times$ degradation from the optimal.

\name is the first approach to minimize \moe inference time via optimal model deployment and communication scheduling across various scenarios. Evaluations demonstrate that \name significantly accelerates inference, achieving speedups of up to 2.38$\times$ in homogeneous clusters and 3.54$\times$ in heterogeneous environments. Moreover, \name enhances GPU utilization by up to 1.5$\times$ compared to existing methods.

\end{abstract}

\maketitle

\section{Introduction}\label{sec:intro}

Serving deep learning and large language models has become increasingly critical as they are integrated into a wide range of online applications, such as programming assistance, search engines, and conversational bots. However, as the size and complexity of these models continue to grow, it is challenging to meet the high computational demands and stringent latency requirement.

Mixture-of-Experts (\moe) models offer an effective solution to reduce computational demands while preserving performance. They achieve this by dynamically activating only a subset of specialized components, known as \textit{experts}, for input \textit{tokens}. This selective activation reduces the overall computational load without sacrificing efficiency and accuracy. By engaging only the most relevant experts for specific tasks, \moe models optimize resource utilization and processing speed.

Despite the considerable benefits, inference of MoE models still faces significant challenges.
The most prominent issue is \textit{high communication overhead}. The all-to-all communication pattern in MoE models, identified as a major bottleneck~\cite{rajbhandari2022deepspeed, he2022fastermoe, huang2023towards}, is largely due to the dynamic selection of experts. This results in uneven data exchange among GPUs, leading to network bandwidth contention and prolonged communication times.

Moreover, MoE models suffer from \textit{low GPU utilization}. This problem arises because all-to-all communication is typically implemented using synchronous operations~\cite{li2023accelerating, shen2022se, Flexmoe, Janus, moesys, Lazarus, Schemoe}. As a result, GPUs hosting unpopular experts remain idle while waiting for communication to complete on GPUs handling popular experts.

Lastly, \textit{GPU heterogeneity}, which is common due to incremental deployments, adds further complexity to MoE model deployment~\cite{mlaas, chen2022ta, heterog, gavel}. The varied hardware configurations complicate the efficient allocation and utilization of resources across the model. To fully harness the potential of MoE models, these challenges need to be effectively addressed.

Existing solutions fail to solve the problem from all fronts. Most approaches either reduce communication overhead by balancing token loads~\cite{fedus2022switch, lepikhin2020gshard, riquelme2021scaling, child2019generating, shazeer2017outrageously, hwang2023tutel, he2022fastermoe, chen2022ta, huang2023towards, hwang2023pre, Hetumoe} or by accelerating the all-to-all operation~\cite{he2022fastermoe, hwang2023tutel, rajbhandari2022deepspeed, shoeybi2019megatron, rajbhandari2020zero, he2021fastmoe, li2023accelerating, Schemoe, moesys}, but still struggle with low GPU utilization. Other approaches pack multiple experts from the same model on a single GPU to reduce idle time~\cite{Flexmoe, Lazarus, Prophet, huang2023towards}, but these experts remain blocked by synchronous all-to-all communication, preventing full interleaving of computation and communication. Besides, these methods rely on empirical approaches, lacking theoretical backing, and are designed for specific settings, failing to account for the diverse configurations of production GPU clusters, such as heterogeneous hardware.

In this paper, we propose \textit{\name, a comprehensive solution for minimizing the inference time of MoE models}. Our design combines expert colocation, GPU assignment, and communication scheduling, supported by theoretical analysis across four distinct GPU cluster settings based on two key dimensions: exclusive vs. colocated experts on GPUs, and homogeneous vs. heterogeneous GPUs. \name achieves \textit{optimal} inference time in most cases, except for the NP-hard scenario of colocating experts on heterogeneous GPUs, where we provide a \textit{sub-optimal} polynomial-time solution with inference time only 1.07$\times$ the optimum, as shown in our simulations.

To the best of our knowledge, \name offers the first theoretical derivation of minimal MoE inference time. Our key insights can guide the development of future MoE inference systems: minimal all-to-all communication time is achieved by ordering token transmission to avoid bandwidth contention; in homogeneous clusters, minimizing inference time is equivalent to minimizing communication time; for exclusive experts on heterogeneous GPUs, assigning experts by token load to GPUs in descending capacity minimizes inference time; and the NP-hard case of colocating experts on heterogeneous GPUs is a 3-dimensional matching problem, which can be approximated by decoupling it into two dependent bipartite graphs.

Extensive simulations demonstrate the effectiveness of \name. Using production MoE inference traces from Google, \name reduces inference time by up to 2.38$\times$ in homogeneous GPU clusters and up to 3.54$\times$ in heterogeneous clusters. By colocating experts from different models, \name also improves GPU utilization by up to 1.5$\times$ compared to state-of-the-art solutions that colocate experts from the same model.
Even with inaccurate inputs for \name's optimization, with up to 75\% noise in model statistics, inference time is extended by only 15.8\%.

\section{Preliminaries}\label{sec:prelim}

In this section, we first explore the structure of \moe inference to understand how the different components work together within the model ($\S$\ref{sec:moe_inference}). Next, we discuss the distinctive features of \moe inference that set it apart from other architectures ($\S$\ref{sec:moe_characteristics}). We then identify the key bottlenecks that affect \moe inference performance ($\S$\ref{sec:moe_bottleneck}). Finally, we outline the essential prerequisites required for \name ($\S$\ref{sec:moe_prerequisites}).

\begin{figure}[tb]
    \centering
    \includegraphics[width=0.70\linewidth]{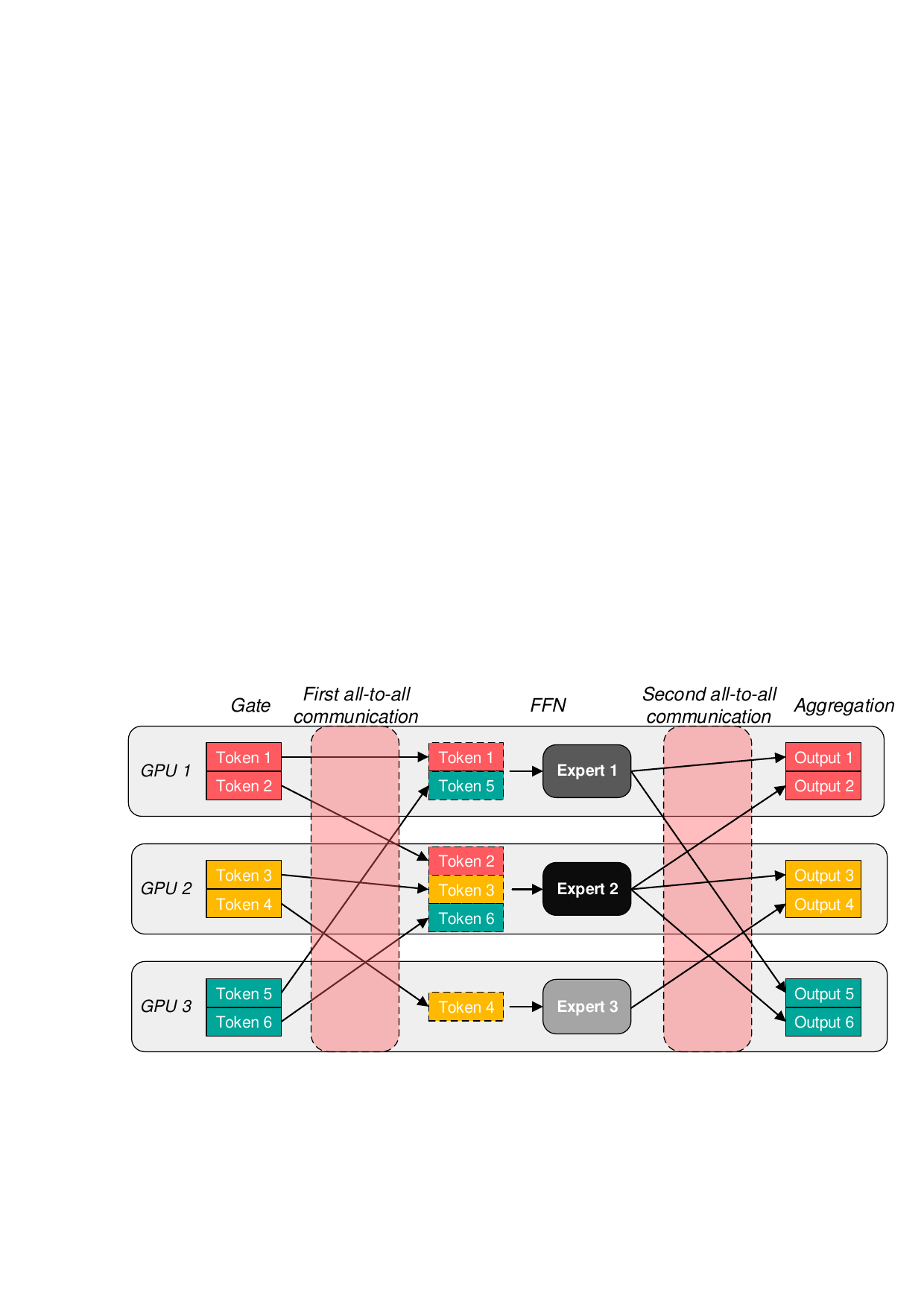}\vspace{-0.10in}
    \caption{\moe model structure.}\label{fig:moe}
    \vspace{-0.15in}
\end{figure}

\subsection{MoE Inference}\label{sec:moe_inference}

An MoE model comprises multiple MoE layers. For \moe training, each layer involves both a forward and a backward pass, while inference requires only the forward pass. Fig.~\ref{fig:moe} illustrates the process of an \moe layer, highlighting the separation of \textit{computation} and \textit{communication phases}. The computation phase consists of three components: the gate function, the feed-forward network (FFN), and aggregation. Two all-to-all communications occur during the communication phase. These two all-to-all communications are opposite in terms of data flows.

\para{Gate.} The gate network determines which experts should be activated for the input tokens. In general, each token will be sent to one or two experts. 

\para{FFN.} An FFN is typically an expert. Each expert is responsible to process the tokens assigned by the gate network.

\para{Aggregation.} This operation reshapes the tensors and computes the weighted output. After aggregation, the process proceeds to the next \moe layer.

\para{First all-to-all communication.} The first all-to-all communication occurs after the gate network. During this process, each token is dispatched to the assigned experts.

\para{Second all-to-all communication.} The second all-to-all communication is for exchanging outputs of experts, ensuring the original sequences are organized before the start of next layer.

\subsection{Characteristics of MoE Inference}\label{sec:moe_characteristics}

Here, we outline three key characteristics of \moe inference, which shed light on the inference bottlenecks discussed in $\S$\ref{sec:moe_bottleneck}.

\para{Synchronous all-to-all communications.} In this process, all-to-all communication is synchronous, meaning that computation (including FFN and aggregation) can only begin once every GPU has completed data transmission. This leads to the GPU computation resource idleness.

\para{Reversed all-to-all communications.} Within the same forward pass, the two all-to-all communications are reversed. For each data transfer from GPU $i$ to $j$ in the first communication, there is a corresponding data transfer from GPU $j$ to $i$ in the second. The data sizes in these transfers are identical, as the FFN architecture ensures that the input and output data sizes are the same.

\para{Non-overlapping communication and computation.} Communication and computation processes do not overlap; each step can only commence after the previous one is completed.

\subsection{MoE Inference Bottlenecks}\label{sec:moe_bottleneck}

\para{High communication overhead.} Existing research has identified all-to-all communication as a significant bottleneck in \moe inference~\cite{rajbhandari2022deepspeed, he2022fastermoe}. A recent study~\cite{huang2023towards} highlights that the all-to-all communication can constitute over 60\% of inference time when using four GPUs, and the overhead increases substantially with additional GPUs . 

The high communication overhead arises from two main factors. First, the dynamic selection of experts results in an uneven distribution of tokens among GPUs~\cite{he2022fastermoe, radford2018improving, child2019generating}, leading to some GPUs being heavily loaded while others remain idle. Second, the all-to-all communication is typically implemented using synchronous operations~\cite{li2023accelerating}. These operations are inefficient as they cause resource wastage when either communication or computation is not fully utilized. This inefficiency is further exacerbated by the dynamic nature of expert selection.

\para{Low GPU utilization.} Uneven load distribution and synchronous all-to-all communication also contribute to low GPU utilization. GPUs supporting unpopular experts remain idle most of the time~\cite{hwang2023pre}. A study of GPU cluster data from Alibaba reveals that less than 10\% of GPUs reach 80\% utilization~\cite{Antman}. 

\para{GPU cluster heterogeneity.} In production clusters, GPU heterogeneity is common due to incremental deployments and rapid advancements in GPU design~\cite{mlaas, gavel, heterog, sia, Hap}. These clusters feature varied hardware configurations, including different types of GPUs and diverse resource setups. This heterogeneity complicates the deployment of \moe models and must be considered to optimize performance.

\subsection{Prerequisites in \name}\label{sec:moe_prerequisites}

Before we delve into the details of each scenario, let's outline the key prerequisites for this work.

\textit{Each GPU hosts at most two models.}
As shown in Fig.~\ref{fig:moe},
MoE inference involves alternating computation and communication phases separated by clear barriers.
Colocating two models on a GPU allows them to efficiently interleave resource usage---one model performs computation while the other uses the network. Adding a third model, however, forces one to wait for resource access, leading to increased inference time.

\textit{The network is represented by a big switch model.} \moe inference typically requires several to dozens of GPUs, often housed within a single rack and connected by a high-performance network. As illustrated in Fig.~\ref{fig:network}(a), this non-blocking network fabric can be modeled as a big switch, which interconnects GPUs enabling low-latency and high-throughput communication between them.

\textit{The optimization is based on historical statistics of \moe models.}
Inference service providers usually collect statistics of MoE models for performance monitoring and troubleshooting, such as the token distribution across GPUs and the average computation times for the Gate network, FFN, and Aggregation operations. \name uses such historical data to guide optimization, and 
this work focuses on theoretical analysis of our optimization mechanisms based on these precise inputs. As shown in our simulations ($\S$\ref{sec:evaluation}), even with up to 75\% unpredictable inference requests after the optimization plan is deployed, the inference time of \name is only degraded by 15.8\%.

\section{Overview}\label{sec:overview}

\begin{figure}[tb]
    \centering
    \includegraphics[width=0.98\linewidth]{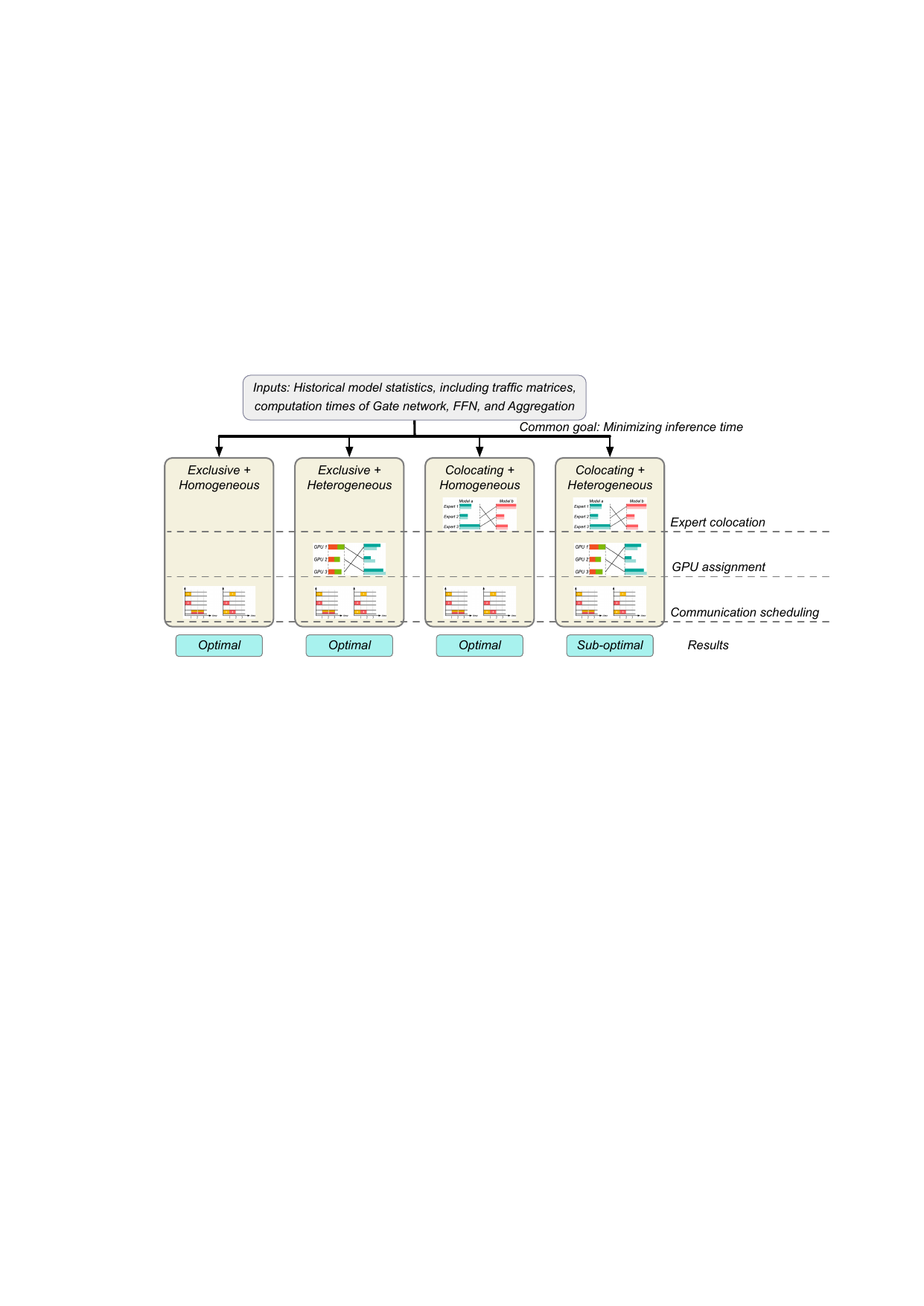}\vspace{-0.15in}
    \caption{\name aims to minimize inference time across four different scenarios. It optimizes expert colocation, GPU assignment, and communication scheduling for each case. \name achieves optimal results in the first three scenarios and delivers suboptimal performance in the final one due to its NP-hardness.}\label{fig:overview}
    \vspace{-0.2in}
\end{figure}

In this section, we begin by outlining \name's inputs and optimization goal across four scenarios of GPU cluster settings. Next, we explain how expert colocation, GPU assignment, and communication scheduling impact inference time. Finally, we provide a summary of each scenario.

\para{Inputs.}
As discussed in $\S$\ref{sec:moe_prerequisites}, we use historical model statistics to guide decisions on expert colocation, GPU assignment, and communication scheduling. At a high level, the inputs include traffic matrices of token distribution across GPUs during each all-to-all communication, as well as computation times for the Gate network, FFN, and Aggregation operations.
The detailed input parameters are listed in Table~\ref{tab:input} and explained in $\S$\ref{sec:exclusive_homo}.

\begin{figure}[tb]
    \centering
    \includegraphics[width=0.98\linewidth]{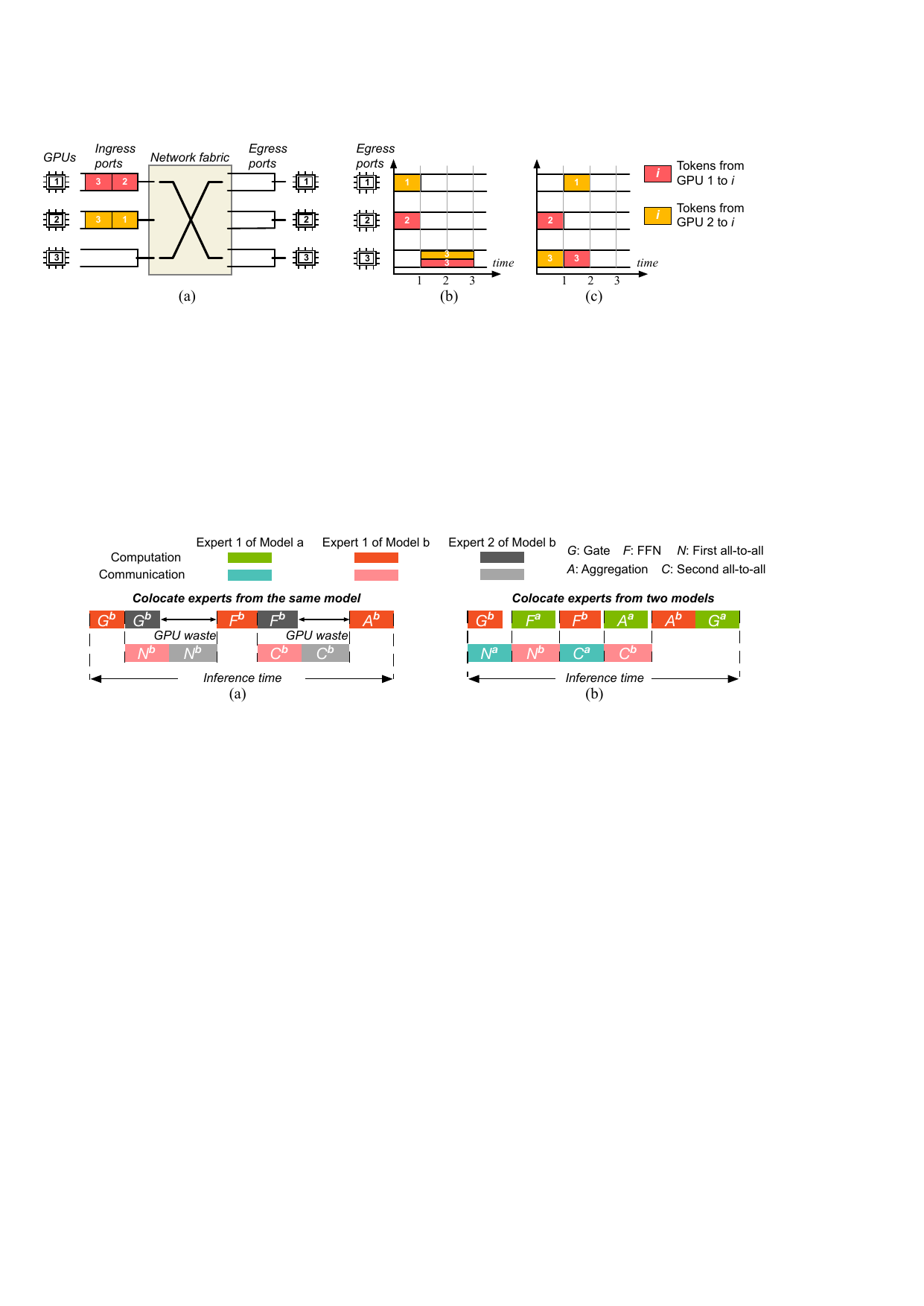}\vspace{-0.10in}
    \caption{(a) Colocating experts from the same model results in wasted GPU resources and increased inference time, as follow-up computations are delayed by synchronous all-to-all communications. (b) Colocating experts from different models enables full interleaving of computation and communication, resolving this issue.
    }\label{fig:colocating_compare}
    \vspace{-0.1in}
\end{figure}

\begin{figure}[tb]
    \centering
    \includegraphics[width=0.98\linewidth]{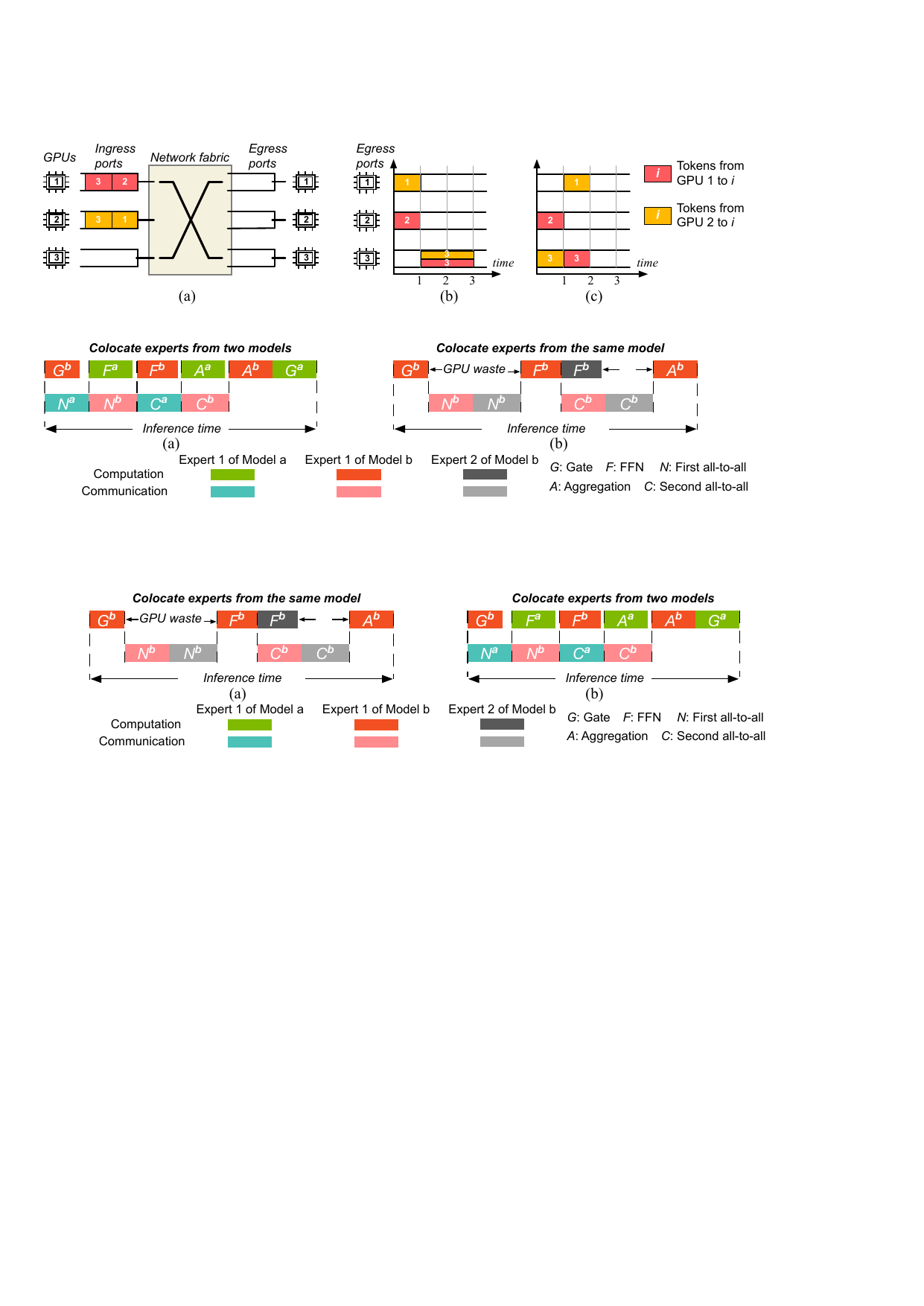}\vspace{-0.10in}
    \caption{(a) A big switch model representing the non-blocking inter-GPU network fabric. (b) Originally, the all-to-all communication of tokens from GPU 1 (red) and GPU 2 (yellow) to all other GPUs takes 3 units of time overall. (c) Optimizing the token order reduces transmission time to 2 units.}\label{fig:network}
    \vspace{-0.15in}
\end{figure}

\para{Optimization goal.}
\name is designed to minimize the inference time of \moe models. Given the diverse settings of modern GPU clusters, we analyze two key dimensions: exclusive GPU usage per model vs. colocating models on the same GPUs, and homogeneous vs. heterogeneous GPU types. Across the four combinations of these dimensions, as shown in Fig.~\ref{fig:overview}, \name achieves optimal performance in the first three scenarios. We prove the last scenario to be NP-hard and propose a sub-optimal solution with inference time only 1.07$\times$ the optimum, based on our evaluation in $\S$\ref{sec:evaluation}.
When possible, we colocate MoE models on GPUs to maximize GPU utilization at best effort.

\para{Expert colocation.}
\name colocates experts on the same GPU to maximize utilization, where applicable, as shown in the colocating scenarios in Fig.~\ref{fig:overview}. As motivated in $\S$\ref{sec:moe_prerequisites}, \name colocates up to two experts per GPU, interleaving their computation and communication. Previous studies colocated experts from the same model~\cite{li2023accelerating}, which wastes GPU resources and extends inference time. As shown in Fig.~\ref{fig:network}(a), this is because experts from the same model are bound by the synchronous all-to-all communication, delaying subsequent computation phases like FFN and Aggregation.

In contrast, \name colocates experts from different models.
To maximize GPU utilization and minimize inference time, it identifies the optimal combination of experts that
complement each other in terms of computation and communication needs. As illustrated in Fig.~\ref{fig:network}(b), the two experts take turns to use available computation and communication resources. \name pairs a computation-intensive expert with a communication-intensive one to efficient use of GPUs.

\para{GPU assignment.}
Heterogeneous clusters, as shown in Fig.~\ref{fig:overview}, require selecting the appropriate GPU types for experts. For instance, deploying a popular expert on a high-performance GPU with high FLOPS, memory capacity, and network bandwidth helps minimize computation and communication times. \name assigns experts to suitable GPU types without worrying about the deployment to specific GPU IDs, as GPUs of the same type are interchangeable when connected through the ``big switch'' model of a non-blocking network, as discussed in $\S$\ref{sec:moe_prerequisites}.

\para{Communication scheduling.}
All four scenarios in Fig.~\ref{fig:overview} require communication scheduling to reduce the communication time, which involves determining the order of token transmission in the all-to-all communications.
Different transmission orders can lead to varying communication times. For instance, in Fig.~\ref{fig:network}(a), GPU 1 sends tokens to GPUs 2 and 3, while GPU 2 sends to GPUs 1 and 3. In Fig.~\ref{fig:network}(b),  communication takes 3 units of time when GPU 1 first sends to GPU 2 and then to GPU 3, and GPU 2 sends to GPU 1 and then to GPU 3. However, as shown in Fig.~\ref{fig:network}(c), changing GPU 2’s transmission order to send to GPU 3 first, then to GPU 1, reduces communication time to 2 units.
In practice, reordering token transmission can be achieved with a buffer layer at the computation operations, which calls communication collective libraries, such as NCCL, in the desired order.

We summarize the main results of our theoretical analysis of each scenario as follows. 

\para{Exclusive + Homogeneous} ($\S$\ref{sec:exclusive_homo}). This scenario considers running models exclusively on clusters where all GPUs have identical computing power and network bandwidth. Theorem~\ref{thm:exclusive_homo_infertime} proves that minimizing inference time is equivalent to minimizing communication time. Theorem~\ref{thm:exclusive_homo_comm_time} further shows that communication time is minimized by ordering token transmission to avoid bandwidth contention at the receiving GPUs. The minimum communication time is determined by the GPU handling the largest traffic volume, whether sending or receiving. Alg.~\ref{alg:order} ($\S$~\ref{sec:exclu_homo_2}) provides the algorithm for finding the optimal order that minimizes inference time.

\para{Exclusive + Heterogeneous} ($\S$\ref{sec:exclusive_hetero}). This scenario tackles the challenges of running models exclusively on GPUs with different computing power and network bandwidth. Theorem~\ref{thm:exclusive_hetero_assignment} demonstrates that sorting experts by token load and assigning them to GPUs in descending order of performance minimizes inference time. Theorem~\ref{thm:exclusive_hetero_comm_time} proves that the transmission order for homogeneous clusters (Theorem~\ref{thm:exclusive_homo_comm_time}) also minimizes communication time in a heterogeneous setting.

\para{Colocating + Homogeneous} ($\S$\ref{sec:colocating_homo}). This scenario examines improving GPU utilization by colocating two \moe models. The colocation strategy affects the aggregated communication time of the two models, and thus, their overall inference time. Theorem~\ref{thm:colocating_homo_comm_time} shows that minimizing the aggregated communication time leads to optimal overall inference time. To this end, we solve the bottleneck matching problem to find the optimal expert colocation, thereby minimizing communication time and achieving optimal inference time.

\para{Colocating + Heterogeneous} ($\S$\ref{sec:colocating_hetero}). Extending the colocation strategy to heterogeneous clusters involves communication scheduling, GPU assignment, and expert colocation, making it the most complex scenario. We model it as a 3-dimensional matching problem, which is NP-hard. By decoupling the matching into two dependent bipartite graphs, we propose a sub-optimal but effective solution, which prolongs the inference time by only 1.07$\times$ compared to the optimum.

\vspace{-0.05in}
\section{Exclusive Models on Homogeneous Clusters}\label{sec:exclusive_homo}

\begin{figure*}[tbp] %
    \centering
    \begin{minipage}[tb]{0.42\textwidth}
        \footnotesize
        \centering
        \setlength{\tabcolsep}{1pt}
        \captionof{table}{Input parameters.}
        \vspace{-0.15in}
        \begin{tabular}{cl}
            \toprule
            Symbol  &   Explanation   \\ %
            \midrule
            $n$                    &  Number of experts      \\ %
            $\mathbb{D}_N$ &  Traffic matrix (token distribution) of the first all-to-all   \\ 
            $\mathbb{D}_C$  &  Traffic matrix (token distribution) of the second all-to-all   \\
            $d_{ij}$                  & Data (tokens) sent from GPU $i$ to $j$        \\ %
            $B_i$                &   Bandwidth of GPU $i$       \\ %
            $|G_i^a|$                &  Computation time of Model $a$'s Gate on GPU $i$ \\
            $|F_i^a|$                &  Computation time of Model $a$'s FFN on GPU $i$  \\ %
            $|A_i^a|$                &  Computation time of Model $a$'s Aggregation on GPU $i$  \\ %
            \bottomrule
        \end{tabular}
        \label{tab:input}\vspace{-0.15in}
    \end{minipage}%
    \hfill
    \begin{minipage}[tb]{0.45\textwidth}
        \centering
        \includegraphics[width=\linewidth]{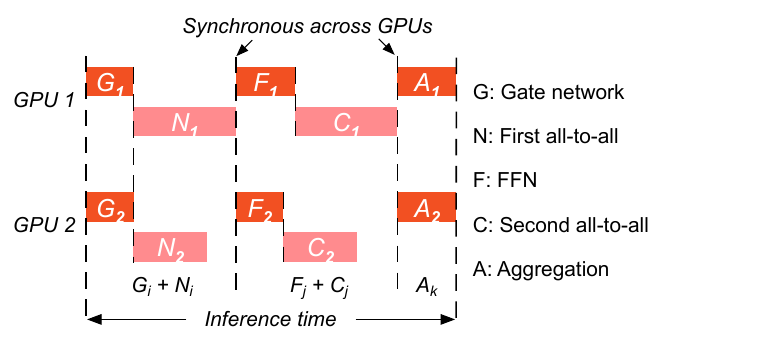}\vspace{-0.1in}
        \caption{Running MoE models exclusively on homogeneous clusters.}
        \label{fig:single_homo}
        \vspace{-0.15in}
    \end{minipage}
\end{figure*}

In this section, we derive the minimum inference time for running models exclusively on homogeneous clusters, referred to as the Exclusive + Homogeneous scenario.

\para{Input parameters.} Table~\ref{tab:input} lists the input parameters used by all four scenarios. For a specified MoE model consisting of $n$ experts, each expert is placed on one GPU, requiring a total of $n$ GPUs. The token distribution in each layer is known in advance  is represented by a traffic matrix $\mathbb{D}_N$ for the first all-to-all communication, and $\mathbb{D}_C$ for the second all-to-all communication. Note that $\mathbb{D}_N$ and $\mathbb{D}_C$ are reversed as we state in $\S$\ref{sec:moe_characteristics}. The matrix is an $n \times n$ matrix with elements $d_{ij}$, indicating the traffic sent from GPU $i$ to $j$. The symbol $|G_i^a|$, $|F_i^a|$, and $|A_i^a|$ represent the computation times of Model $a$'s Gate, FFN, and Aggregation components, respectively, on GPU $i$.

\para{Solution overview.} We first prove that in the Exclusive + Homogeneous scenario, minimizing inference time is equivalent to minimizing communication time ($\S\ref{sec:exclu_homo_1}$). Next we show how to determine the transmission order to achieve minimal communication time ($\S\ref{sec:exclu_homo_2}$).

\subsection{Minimizing inference time equals minimizing communication time}\label{sec:exclu_homo_1}

\begin{theorem}\label{thm:exclusive_homo_infertime}
In the Exclusive + Homogeneous scenario, minimizing inference time is equivalent to minimizing communication time.
\end{theorem}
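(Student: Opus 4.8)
The plan is to show that the end-to-end inference time decomposes as a schedule-independent constant plus the total all-to-all communication time; once this is established, the equivalence is immediate. First I would invoke the \emph{non-overlapping} and \emph{synchronous all-to-all} properties from $\S$\ref{sec:moe_characteristics}: within every MoE layer the execution is a strict sequence of five phases --- Gate, first all-to-all, FFN, second all-to-all, Aggregation --- and each phase can begin only after the previous one has completed on \emph{every} GPU. Hence the duration of one layer is exactly $\max_i |G_i| + T_1 + \max_i |F_i| + T_2 + \max_i |A_i|$, where $T_1$ and $T_2$ are the completion times of the two all-to-all communications, and the total inference time is the sum of these per-layer durations over all MoE layers.

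Next I would argue that in the Exclusive + Homogeneous setting the only degree of freedom available to the optimizer is the order in which tokens are transmitted during the two all-to-all phases. There is a single model running exclusively, so there are no colocation decisions; all GPUs are identical and each expert is pinned to its own GPU, so there is no GPU-assignment decision; and the token distribution matrices $\mathbb{D}_N, \mathbb{D}_C$ together with the per-GPU computation times $|G_i|, |F_i|, |A_i|$ are fixed inputs (Table~\ref{tab:input}). Consequently each computation term $\max_i |G_i|$, $\max_i |F_i|$, $\max_i |A_i|$ is a constant independent of the communication schedule, and summing over layers collects all of them into a single fixed constant $C$. The inference time therefore equals $C$ plus the aggregate communication time $\sum_\ell (T_1^{(\ell)} + T_2^{(\ell)})$, so a schedule minimizes the inference time if and only if it minimizes the total communication time, which is the claim.

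The main obstacle I anticipate is rigorously justifying this additive, phase-by-phase decomposition --- in particular, ruling out the possibility that part of the communication cost is hidden behind computation or that a faster Gate on one GPU could let its all-to-all traffic start early. This is exactly where the synchronous-barrier and non-overlap assumptions must be used precisely: they force every GPU to reach the same barrier before any communication or any subsequent computation starts, which makes the timeline a genuine serialization of phases rather than a general scheduling problem. A secondary point worth stating carefully is that $T_1$ and $T_2$ can be chosen independently, since the two all-to-all phases are separated by the FFN barrier, so "minimizing communication time" legitimately reduces to minimizing each all-to-all completion time on its own traffic matrix (the content of Theorem~\ref{thm:exclusive_homo_comm_time}).
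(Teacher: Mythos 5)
Your proposal is correct and follows essentially the same route as the paper: decompose the per-layer time into a sum of phase maxima, observe that in the Exclusive + Homogeneous setting every computation term is a schedule-independent constant, and conclude that minimizing inference time reduces to minimizing the all-to-all completion times. The one point of divergence is the decomposition itself: you impose a global barrier before each all-to-all (so the first phase costs $\max_i|G_i| + T_1$), whereas the paper's Eqn.~\ref{eqn:single_homo_1} lets each GPU begin sending as soon as its \emph{own} Gate finishes and therefore writes $\max_i(|G_i|+|N_i|)$ and $\max_j(|F_j|+|C_j|)$; splitting the latter into $\max_j|F_j|+\max_j|C_j|$ requires the paper's extra observation that the FFN time and the second all-to-all load both peak on the same most-loaded GPU. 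In the homogeneous case the two decompositions coincide (Gate times are identical across GPUs, and the reversed traffic matrices put the communication bottleneck on the FFN-bottleneck GPU), so your argument is sound as stated, but your stricter barrier model would overestimate these terms in a setting where per-GPU computation times differ.
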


\begin{proof}
This proof is straightforward because, in the Exclusive + Homogeneous scenario, the only factor influencing inference time is the communication scheduling.

We first derive the inference time expression. As shown in Fig.~\ref{fig:single_homo}, the two all-to-all communications are synchronous across GPUs. This synchronization means that the FFN and Aggregation processes can only start after the last data flow is complete. The inference time is therefore divided into three parts: the Gate and the first all-to-all communication, the FFN and the second all-to-all communication, and the Aggregation. Due to the strict barrier between each layer, the inference time for a layer is determined by the slowest GPU. So the inference time is determined by summing the maximum values of each part, as represented by the following equation.

\vspace{-0.15in}
\begin{equation}\label{eqn:single_homo_1}
\textit{Inference time } \;\; t = \max(|G_i| + |N_i|) + \max(|F_j| + |C_j|) + \max(|A_k|),\; i,j,k \in [1, n]
\end{equation}
\vspace{-0.15in}

In Eqn.~\ref{eqn:single_homo_1}, the symbols $|G_i|$, $|F_i|$, and $|A_i|$ indicate the duration of the Gate, FFN, and Aggregation processes on GPU $i$. The symbols $i, j, k$ each represent different GPUs, as the maximum processing time for each part can occur on different GPUs. For this scenario, we can make the following three observations.

\begin{enumerate}[label=(\arabic*), leftmargin=3em, topsep = 0.5em]
    \item The assignment of GPUs to experts in homogeneous clusters requires no special decisions, as all GPUs possess identical computational power and network bandwidth.
    \item The computation times for Gate processes are equal across all GPUs, and the same applies to Aggregation.
    \item The computation time for the FFN is determined by the number of tokens it processes, with more tokens resulting in longer computation times.
\end{enumerate}

With observations (1) and (2), we have $|G_i| = |G|$, $|A_k| = |A|$. Based on observation (3), we can state that $\max(|F_j| + |C_j|) = \max(|F_j|) + \max(|C_j|)$, since $|F_j|$ and $|C_j|$ increase simultaneously. Thus, Eqn.~\ref{eqn:single_homo_1} can be expressed as follows.

\vspace{-0.15in}
\begin{equation}\label{eqn:single_homo_2}
t = |G| + \max(|N_i|) + \max(|F_j|) + \max(|C_j|) + |A|,\; i,j \in [1, n]
\end{equation}
\vspace{-0.15in}

As discussed in $\S$\ref{sec:moe_characteristics}, the two all-to-all communications are reversed. The GPU receiving the highest volume of data at $\mathbb{D}_N$ is also the one transmitting the largest amount at $\mathbb{D}_C$. Consequently, $\max(|N_i|)$ and $\max(|C_j|)$ occur on the same GPU. Therefore, Eqn.~\ref{eqn:single_homo_2} can be further expressed as:

\vspace{-0.15in}
\begin{equation}\label{eqn:single_homo_3}
t = |G| + \max(|N_i|) + \max(|F_i|) + \max(|C_i|) + |A|,\; i \in [1, n]
\end{equation}
\vspace{-0.15in}

In Eqn.~\ref{eqn:single_homo_3}, $\max(|F_i|)$ represents the computation time of the FFN processing the highest number of tokens, which is constant regardless of its deployment. Therefore, to achieve optimal inference time, the remaining task is to minimize $\max(|N_i|)$, the first all-to-all communication time.
\end{proof}

\subsection{Scheduling transmission order to minimize communication time}\label{sec:exclu_homo_2}

Fig.~\ref{fig:network}(a)-(c) show that the communication time depends on the order in which tokens are transmitted. Intuitively, the time for an all-to-all communication cannot be less than the time it takes for the GPU with the heaviest traffic to send or receive its tokens. For example, suppose GPU $i$ receives the largest amount of traffic, denoted by $d$, and the bandwidth is $B$. This means GPU $i$ will need at least $d/B$ time to receive all tokens. The question is, can we design a transmission order that completes the all-to-all communication in exactly $d/B$ time? The following theorem provides an affirmative answer.

\begin{theorem}\label{thm:exclusive_homo_comm_time}
The communication time is minimized by transmitting tokens in an order that avoids bandwidth contention at the receiving sides. The minimum communication time is $b_{max}$ = $max(\sum_{j=1}^{n}d_{ij},\; \sum_{i=1}^{n}d_{ij}) / B$. 
\end{theorem}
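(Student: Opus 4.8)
The plan is to prove the two halves separately: first the lower bound, that no schedule can beat $b_{max}$, and then the matching upper bound, that some schedule achieves exactly $b_{max}$. The lower bound is immediate: any GPU $i$ must send a total of $\sum_j d_{ij}$ tokens and receive a total of $\sum_i d_{ij}$ tokens, and since it has a single link of bandwidth $B$ that is half-duplex in each direction (or, in the big-switch model, a port of capacity $B$), the time to drain its outgoing traffic is at least $(\sum_j d_{ij})/B$ and likewise for incoming; taking the max over all GPUs and over both directions gives $t \ge b_{max}$. The real work is the upper bound.

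For the upper bound I would model one all-to-all as a bipartite multigraph $H$ on vertex sets $\{s_1,\dots,s_n\}$ (senders) and $\{r_1,\dots,r_n\}$ (receivers), where the ``weight'' of edge $(s_i, r_j)$ is $d_{ij}$ (think of it as $d_{ij}/B$ units of transmission work, or after scaling, an integer number of unit-size chunks). A feasible schedule at any instant is a partial matching in $H$ — at most one token stream out of each sender and into each receiver — and the communication time is the minimum number of matchings (weighted by duration) needed to cover all edge weight, i.e. a fractional/weighted edge-coloring of $H$. The key classical fact is that for bipartite (multi)graphs the chromatic index equals the maximum degree (König's edge-coloring theorem), and its weighted analogue states that the minimum makespan of such a decomposition equals the maximum weighted degree $b_{max}$. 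So the steps are: (i) round $d_{ij}$ to integer multiples of a small chunk size $\epsilon$ (or invoke the Birkhoff–von Neumann / weighted-König argument directly on the rationals); (ii) pad the bipartite graph to make it $b_{max}$-regular by adding dummy edges, which is always possible since all weighted degrees are $\le b_{max}$; (iii) apply König to decompose the regular bipartite graph into $b_{max}/\epsilon$ perfect matchings; (iv) schedule these matchings back-to-back, each for duration $\epsilon/B$, and delete the dummy edges — the result is a contention-free schedule (every sender and every receiver is busy in at most one transfer per matching) of total length exactly $b_{max}$. Finally, observe the rounding error vanishes as $\epsilon \to 0$, or note that in practice token counts are integers so a natural $\epsilon$ exists.

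The main obstacle — and the place where I would be most careful — is step (ii)–(iii): making precise the claim that a weighted bipartite graph with maximum weighted degree $b_{max}$ admits a schedule of makespan exactly $b_{max}$, rather than merely $\lceil \cdot \rceil$ or $b_{max}+O(\epsilon)$. This is the weighted edge-coloring statement and it does hold for bipartite graphs, but one must handle the padding-to-regular step cleanly (the dummy edges must be addable without exceeding degree $b_{max}$ at any vertex, which follows from a standard deficiency-filling argument) and must argue that the discretization does not accumulate error. I would also need to connect this abstract decomposition back to the ``transmission order'' language of the theorem statement — namely, reading off from the sequence of matchings, for each GPU, the order in which it contacts its destinations — and to confirm this ordering is exactly what Alg.~\ref{alg:order} produces. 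An alternative, more self-contained route that avoids citing König is a direct greedy/exchange argument: repeatedly peel off a matching that saturates every currently-maximum-weighted-degree vertex (such a matching exists by Hall's theorem applied to the set of ``critical'' vertices), reducing $b_{max}$ by the matching's duration each time; I would likely present whichever of these is shorter, but flag the weighted-König route as the conceptual core.
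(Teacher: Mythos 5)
Your proposal is correct and follows essentially the same route as the paper: the paper's artificial traffic matrix $\mathbb{X}$, which augments $\mathbb{D}$ so that every row and column sum equals $b_{max}$, is exactly your padding-to-regular step, and its slot-by-slot argument that $n$ tokens destined for $n$ distinct GPUs can always be found is exactly your decomposition of the regular bipartite multigraph into perfect matchings. The only differences are in tooling: the paper establishes the existence of a non-negative padding via Farkas' Lemma rather than a deficiency-filling argument, and justifies the matching decomposition with a direct contradiction argument instead of citing König's edge-coloring theorem.
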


Theorem~\ref{thm:exclusive_homo_comm_time} establishes that GPU should avoid sending tokens to the same destination simultaneously. Fig.~\ref{fig:network}(c) presents an optimal order that avoids bandwidth contention at the receiving sides. This order guarantees that at any time, each GPU only receives tokens from one GPU. Theorem~\ref{thm:exclusive_homo_comm_time} also shows that the minimum communication time is determined by the maximum column or row sum in the traffic matrix $\mathbb{D}$\footnote{We will remove the elements in the diagonal of matrix $\mathbb{D}$ as the source and destination are the same.}. In other words, if the largest amount of data being sent or received on a single GPU is $d$, then the entire all-to-all communication can be completed in $d/B$ time. A sketch of the proof is provided below, with the detailed proof available in Appx.~\ref{appendix:theorem1}.

\begin{proof}[Sketched Proof]

In homogeneous clusters, we set $B$ to 1 for simplification. The proof involves transforming the traffic matrix $\mathbb{D}$ into $\mathbb{D'}$ by adding artificial traffic matrix $\mathbb{X}$ with non-negative values. With the updated matrix $\mathbb{D'}$, it ensures that the sum of each column or row equals $b_{max}$. We then demonstrate the all traffic in $\mathbb{D'}$ can be transmitted within the time $b_{max}$, by constructing a transmission order where GPUs do not send tokens to the same destination simultaneously. Since $\mathbb{D'}$ is constructed by augmenting the original traffic matrix $\mathbb{D}$ with the non-negative traffic matrix $\mathbb{X}$, the time required for transmitting traffic in $\mathbb{D}$ cannot exceed $b_{max}$. The optimal transmission order can also be obtained by removing artificial traffic from $\mathbb{D'}$.

Moving forward, our approach unfolds in three key steps. Initially, we illustrate the conversion of the traffic matrix $\mathbb{D}$ into $\mathbb{D'}$ by incorporating matrix $\mathbb{X}$. Subsequently, we prove that the minimum communication time for $\mathbb{D'}$ is $b_{max}$. Finally, we prove the existence of a non-negative $\mathbb{X}$.

\indent \para{\textbf{1. Convert $\mathbb{D}$ to $\mathbb{D'}$ by adding non-negative $\mathbb{X}$}}

\begin{itemize}
\renewcommand{\labelitemi}{\scriptsize$\bullet$}
    \item Construct $\mathbb{D'}$ by adding non-negative artificial traffic matrix $\mathbb{X}$ to $\mathbb{D}$: $\mathbb{D}$ + $\mathbb{X}$ = $\mathbb{D'}$.
    \item Ensure for each row $\sum_{j=1}^{n}d'_{ij} = b_{max}$, and for each column $ \sum_{i=1}^{n}d'_{ij} = b_{max}$, $d'_{ij} \in$ $\mathbb{D'}$.
\end{itemize}

\indent \para{\textbf{2. Prove the minimum communication time for $\mathbb{D'}$ is $b_{max}$}}

\begin{itemize}
\renewcommand{\labelitemi}{\scriptsize$\bullet$}
    \item In each time slot, each GPU sends and receives exactly one token.
    \item Demonstrate that each GPU can send and receive data without interruption. As a result, all GPUs complete their communication within $b_{max}$, making the all-to-all communication time equal to $b_{max}$.
\end{itemize}

\indent \para{\textbf{3. Prove the existence of non-negative $\mathbb{X}$}}
\begin{itemize}
\renewcommand{\labelitemi}{\scriptsize$\bullet$}
    \item Transform the $n \times n$ matrix $\mathbb{X}$ to an $n^{2} \times 1$ vector $\mathbf{x}$.
    \item Formulate the problem using the system of equations: $\mathbb{A} \mathbf{x} = \Delta \mathbf{b}$.
    \item Use Farkas' Lemma~\cite{farkas} to show that a non-negative solution $\mathbf{x}$ exists, which implies the existence of $\mathbb{X}$.
\end{itemize}
\vspace{-0.15in}
\end{proof}

\para{Determining the token transmission order.} We show how to establish the token transmission order for each GPU, with the input of traffic matrix $\mathbb{D}$. According to Theorem~\ref{thm:exclusive_homo_comm_time}, data transmission at the bottleneck GPU should be continuous. Therefore, we first determine the order at the bottleneck.

As shown in Alg.~\ref{alg:order}, we begin by identifying the bottleneck GPU, the one handling the most traffic. The transmission order at the bottleneck can be chosen randomly. After establishing this, we remove the traffic from $\mathbb{D}$ (\textit{Lines 1--4}). For the remaining GPUs, we sort them based on their traffic load in descending order (\textit{Line 6}) and arrange the token transmission to avoid conflicts with the existing order (\textit{Line 8}). The order follows the pattern illustrated in Fig.~\ref{fig:theorem_1}(b), Appx.~\ref{appendix:theorem1}. We continue to remove traffic and update $\mathbb{D}$ accordingly (\textit{Line 10}). This process repeats until $\mathbb{D}$ is empty, resulting in a token transmission order that meets the requirements of Theorem~\ref{thm:exclusive_homo_comm_time}.

In summary, with Theorem~\ref{thm:exclusive_homo_comm_time} we can derive the optimal communication time, and further obtain the the minimum inference time with Eqn.~\ref{eqn:single_homo_3}.

\begin{algorithm}[tb]
    \small
	\caption{Determine token transmission order}\label{alg:order}
	\KwData{All-to-all traffic matrix $\mathbb{D}$}
    \KwResult{Token transmission order $\mathcal{O}$}
	Set $\mathcal{O} \leftarrow \emptyset$;\\
    Find the bottleneck GPU (with the most traffic) \\
    Choose a random order for tokens at the bottleneck, add to $\mathcal{O}$ \\
    Remove the bottleneck traffic from $\mathbb{D}$\\
	\While{$\mathbb{D}$ is not empty}{
		Sort GPUs by traffic amount in descending order \\
        \For{Each GPU $i$ in sorted list}{
            Arrange tokens to avoid conflicts with existing order in $\mathcal{O}$\\
            Add the new order for GPU $i$ to $\mathcal{O}$ \\
            Remove traffic handled by GPU $i$ from $\mathbb{D}$\\
        }
	}
    return $\mathcal{O}$
\end{algorithm}

\begin{tcolorbox}[colback=blue!3, colframe=blue!30!black, boxrule=0.25mm, width=\linewidth, title=Takeaway 1,  colbacktitle=blue!40,   %
coltitle=black,         %
fonttitle=\bfseries\itshape,
]
\begin{itemize}
    \item In the Exclusive + Homogeneous scenario, minimizing inference time is equivalent to minimizing communication time.
    \item \name determines the optimal token transmission order, ensuring that each GPU can receive data without bandwidth contention, thereby achieving minimum all-to-all communication time.
\end{itemize}
\end{tcolorbox}

\section{Exclusive Models on Heterogeneous Clusters}\label{sec:exclusive_hetero}

In this section, we derive the minimum inference time for running models exclusively on heterogeneous GPU clusters, referred to as the Exclusive + Heterogeneous scenario. 

\para{Solution overview.} We first show how to assign GPUs optimally ($\S\ref{sec:exclu_hetero_1}$). Next, we demonstrate that the transmission order obtained in homogeneous clusters remains optimal in a heterogeneous environment ($\S\ref{sec:exclu_hetero_2}$).

\subsection{Finding optimal GPU assignment}\label{sec:exclu_hetero_1}

\begin{wrapfigure}{r}{0.5\textwidth}
    \centering
    \includegraphics[width=1\linewidth]{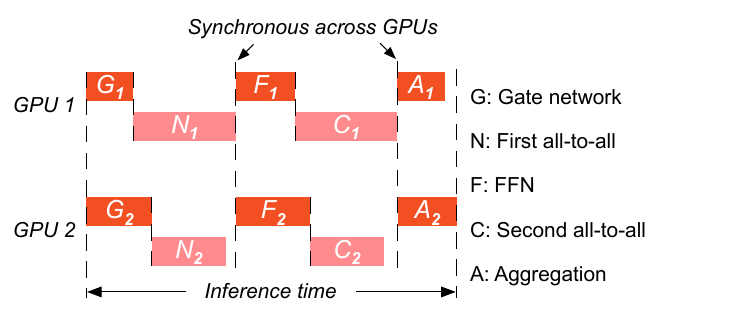}\vspace{-0.10in}
    \caption{Running \moe models exclusively on heterogeneous clusters.}\label{fig:single_hetero}
    \vspace{-0.25in}
\end{wrapfigure}

Fig.~\ref{fig:single_hetero} presents the exclusive + Heterogeneous scenario. Different from the homogeneous case, the three observations for Exclusive + Homogeneous in $\S$\ref{sec:exclusive_homo} do not hold. Most importantly, experts should be placed on the heterogeneous clusters carefully to reduce the inference time.

\begin{theorem}\label{thm:exclusive_hetero_assignment}
In a heterogeneous cluster, the optimal GPU assignment is to sort the experts by the number of tokens they process in descending order and then assign them to GPUs from the highest to the lowest performance.
\end{theorem}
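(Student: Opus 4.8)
The plan is to establish optimality of the sort-and-assign rule by an exchange argument. First I would set up the cost model precisely. In the Exclusive + Heterogeneous scenario, each expert $e$ carries a known token load $\ell_e$, and each GPU $g$ has a performance level that governs both its compute speed (so that $|F|$, $|G|$, $|A|$ scale inversely with it) and its bandwidth $B_g$. By the structure inherited from Theorem~\ref{thm:exclusive_homo_infertime} and Eqn.~\ref{eqn:single_homo_3}, the inference time decomposes into terms that are each of the form $\max_i(\text{something monotone in } \ell_{e(i)} \text{ and decreasing in the performance of GPU } i)$: the gate term, the first all-to-all term $\max_i |N_i|$ (which by Theorem~\ref{thm:exclusive_homo_comm_time} depends on the row/column sums of the traffic matrix and the bandwidth), the FFN term, the second all-to-all term, and the aggregation term. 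The key observation I would make is that a GPU hosting a higher-load expert has a larger value in \emph{every} one of these per-GPU quantities, and a higher-performance GPU shrinks every one of them; hence the whole objective is a nested maximum of functions $\phi(\ell_{e(g)}, p_g)$ that are nondecreasing in $\ell$ and nonincreasing in $p_g$.

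Next I would run the exchange argument. Take any assignment that is not sorted: then there exist two GPUs $g_1, g_2$ with $p_{g_1} > p_{g_2}$ but the expert on $g_1$ has load $\ell_1 < \ell_2 = $ load on $g_2$, i.e.\ a heavier expert sits on the weaker GPU. Swap the two experts. I would argue that this swap does not increase any of the five per-GPU terms in the objective: for the two affected GPUs, the new pair of values $\{\phi(\ell_2, p_{g_1}), \phi(\ell_1, p_{g_2})\}$ is dominated (entrywise, after sorting) by the old pair $\{\phi(\ell_1, p_{g_1}), \phi(\ell_2, p_{g_2})\}$, because moving the heavier load onto the faster GPU is the ``better'' pairing in the standard rearrangement sense; all other GPUs are untouched, so the maximum over all GPUs cannot go up. Iterating such swaps (a finite process, since each strictly reduces the number of inversions, or we can appeal to the fact that bubble-sort terminates) transforms any assignment into the fully sorted one without ever increasing inference time, which proves the sorted assignment is optimal.

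The subtle point — and the main obstacle — is handling the all-to-all communication terms, because $|N_i|$ is not simply a function of the single expert on GPU $i$ but of an entire row/column of the traffic matrix, and swapping two experts permutes both a row and a column of $\mathbb{D}_N$ simultaneously. I would need to argue that the relevant quantity for the bottleneck (the max row/column sum divided by $B_i$, per Theorem~\ref{thm:exclusive_homo_comm_time}) still behaves monotonically: the GPU hosting the most-loaded expert both sends and receives the most traffic (this is exactly the correlation already used to collapse Eqn.~\ref{eqn:single_homo_2} into Eqn.~\ref{eqn:single_homo_3}), so placing it on the highest-bandwidth GPU minimizes that term, and the swap argument goes through term-by-term. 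A secondary obstacle is the possibility of ties in load or in performance, which I would dispose of by noting that any sorted order is then equivalent, so the rule is well-defined up to ties. Finally I would remark that this argument is purely about the assignment and is orthogonal to the transmission-order question, which is deferred to Theorem~\ref{thm:exclusive_hetero_comm_time} in $\S\ref{sec:exclu_hetero_2}$.
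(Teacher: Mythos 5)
Your proposal takes essentially the same route as the paper: a pairwise exchange argument showing that placing the heavier-load expert on the weaker GPU cannot yield a smaller maximum per-GPU time, hence the sorted assignment is optimal. The paper's version is terser---it checks only a single swap away from the sorted assignment and omits both the iteration to global optimality and your (valid and worthwhile) caveat that the communication term $|N_i|$ is a row/column-sum quantity rather than a purely local function of the hosted expert---so your write-up is, if anything, more complete than the original.
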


\begin{proof}
Following Theorem~\ref{thm:exclusive_hetero_assignment}, we assign high-end GPUs to the most popular experts in descending order. Let's assume GPUs $m$ and $n$ are assigned to experts $p$ and $q$, respectively. GPU $m$ has higher performance than GPU $n$, and expert $p$ is more popular than expert $q$. The inference times on GPUs $m$ and $n$ are $t_m$ and $t_n$, respectively.

Now, suppose we reverse the assignment, mapping GPU $m$ to expert $q$ and GPU $n$ to expert $p$. The new inference times on GPUs $m$ and $n$ are $t'_m$ and $t'_n$, respectively. Because the more popular expert $p$ is now assigned to the lower-end GPU $n$, we have $t'_n > t_m$\footnote{In this work, a GPU with higher computational power will not have lower bandwidth compared to a lower-end GPU.}. Previously, GPU $n$ was handling the less popular expert $q$ with inference time $t_n$, so $t'_n > t_n$. Therefore, we can conclude that $t'_n > t_m$ and $t'_n > t_n$, indicating that we cannot achieve $\max(t'_m, t'_n) < \max(t_m, t_n)$. 

Thus, altering the assignment order outlined in Theorem~\ref{thm:exclusive_hetero_assignment} will not lead to a better solution.
\end{proof}

\subsection{Finding optimal transmission order}\label{sec:exclu_hetero_2}

Once we have determined the GPU assignment strategy, the computation time on each GPU is known. And the next step is to determine the communication time ($|N_i|$ and $|C_j|$ in Eqn.~\ref{eqn:single_homo_1}). In $\S$\ref{sec:exclusive_homo}, we propose Theorem~\ref{thm:exclusive_homo_comm_time} to calculate $\max(|N_i|)$ and $\max(|C_j|)$ in the homogeneous cluster. However, This theorem cannot be directly applied to the Exclusive + Heterogeneous scenario. The main difference is that network bandwidth varies across a heterogeneous cluster. Therefore, we propose an extension of Theorem~\ref{thm:exclusive_homo_comm_time} to address this issue.

\begin{theorem}\label{thm:exclusive_hetero_comm_time}
The transmission order obtained in homogeneous clusters remains optimal in a heterogeneous environment. The minimum communication time is $b_{max}$ = $max(\sum_{j=1}^{n}d_{ij}/ B_i,\; \sum_{i=1}^{n}d_{ij}/ B_i)$. 
\end{theorem}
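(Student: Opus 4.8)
The plan is to reduce the heterogeneous case to the homogeneous one by a time-rescaling argument, exploiting the fact that each GPU's bandwidth acts as a fixed per-source (equivalently, by the reversed-matrix property, per-destination) multiplier on transmission durations. First I would observe that, as in Theorem~\ref{thm:exclusive_homo_comm_time}, a lower bound on the all-to-all completion time is immediate: GPU $i$ must spend at least $\sum_{j} d_{ij}/B_i$ time sending and at least $\sum_{j} d_{ji}/B_i$ time receiving (here I would be careful about whether the receiving bottleneck is governed by the receiver's own bandwidth $B_i$ or the senders' bandwidths — the theorem statement writes $\sum_i d_{ij}/B_i$, and I would reconcile this with the footnote assumption that a higher-compute GPU never has lower bandwidth, which together with the descending-order assignment of Theorem~\ref{thm:exclusive_hetero_assignment} lets the heaviest-traffic GPU also be a highest-bandwidth GPU). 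So $b_{max}$ as defined is a valid lower bound, and the work is to exhibit a schedule achieving it.

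The key step is the achievability direction. I would take the optimal contention-free transmission order produced for the \emph{homogeneous} problem on the same traffic matrix $\mathbb{D}$ (via Alg.~\ref{alg:order}), and argue that its structural guarantee — at every instant, each receiving GPU is fed by at most one sender, and the bottleneck GPU transmits without interruption — is preserved when durations are stretched by the per-GPU factors $1/B_i$. Concretely, the homogeneous schedule can be viewed as a decomposition of $\mathbb{D}'$ (the augmented doubly-$b_{max}$-regular matrix from the sketch of Theorem~\ref{thm:exclusive_homo_comm_time}) into a sequence of partial permutation matrices; each such ``round'' has every GPU sending to a distinct destination, so scaling GPU $i$'s send-time in that round by $1/B_i$ keeps the rounds conflict-free at the receivers. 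Summing over rounds, GPU $i$'s total active send time is exactly $(\sum_j d'_{ij})/B_i = b_{max}^{\text{hom}} \cdot \text{(stuff)}$ — I would instead argue directly that each GPU finishes sending and receiving within $b_{max}$ because its row/column sum, divided by its bandwidth, is at most $b_{max}$, and the contention-freeness means no GPU is ever forced to idle while it still has data to send or a free slot to receive. The same Farkas/augmentation machinery from Theorem~\ref{thm:exclusive_homo_comm_time} applies verbatim to the rescaled instance, since scaling rows by positive constants does not affect the feasibility of the linear system for the artificial matrix $\mathbb{X}$.

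The main obstacle I anticipate is making the ``no forced idling'' argument rigorous in the heterogeneous setting: in the homogeneous case all rounds have equal length, so the permutation decomposition yields a clean synchronous schedule, whereas with distinct bandwidths different GPUs finish each round at different times, so the schedule becomes asynchronous and one must show that the resulting greedy/list-style schedule still never stalls a GPU that has remaining traffic. I would handle this by appealing to the regularity of $\mathbb{D}'$ — since every row and every column sums to (bandwidth-weighted) $b_{max}$, a Hall-type / Birkhoff–von Neumann argument guarantees that at any moment where some GPU still has unsent data there is a system of distinct representatives matching remaining senders to remaining receivers, so the schedule can always make progress on the bottleneck and never wastes the bottleneck GPU's bandwidth; the bottleneck therefore finishes at exactly $b_{max}$, and by construction no other GPU finishes later. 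Finally I would note the schedule for the original $\mathbb{D}$ is recovered by deleting the artificial $\mathbb{X}$-traffic, which only removes load and hence cannot increase completion time, giving the matching upper bound $b_{max}$ and completing the proof that the homogeneous-derived order remains optimal.
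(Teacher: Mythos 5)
Your proposal follows essentially the same route as the paper's proof in Appendix~B: establish $b_{max}$ as a lower bound from the bandwidth-weighted row/column sums, augment $\mathbb{D}$ with a non-negative artificial matrix $\mathbb{X}$ (whose existence is shown via Farkas' Lemma, unchanged by the rescaling) so that every weighted row and column sums to $b_{max}$, and then realize a contention-free schedule by decomposing the augmented matrix into per-round permutations. The one place you go beyond the paper is in explicitly flagging and addressing the asynchrony of rounds under distinct bandwidths via a Hall-type progress argument --- the paper's proof handles this only implicitly by reusing the uniform ``time slot'' language from the homogeneous case --- so your version is, if anything, the more careful of the two.
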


Theorem~\ref{thm:exclusive_hetero_comm_time} states that the transmission order derived for homogeneous clusters remains optimal in a heterogeneous environment. The minimum communication time is determined by the GPU that takes the longest time to complete sending or receiving.

The proof for Theorem~\ref{thm:exclusive_hetero_comm_time} follows the same structure of Theorem~\ref{thm:exclusive_homo_comm_time}. The detailed proof can be found in Appx.~\ref{appendix:theorem2}.

\begin{tcolorbox}[colback=blue!3, colframe=blue!30!black, boxrule=0.25mm, width=\linewidth, title=Takeaway 2,  colbacktitle=blue!40,   %
coltitle=black,         %
fonttitle=\bfseries\itshape,
]
\begin{itemize}
    \item The GPU assignment affects the inference time in heterogeneous clusters.
    \item The optimal GPU assignment involves sorting experts by number of tokens processed, then assigning them to GPUs from highest to lowest performance.
    \item The transmission order developed for homogeneous clusters remains optimal in a heterogeneous environment.
\end{itemize}
\end{tcolorbox}

\section{Colocating Models on Homogeneous Clusters}\label{sec:colocating_homo}

In this section, we explore the best way to place two \moe models on a homogeneous cluster. This scenario is termed Colocating + Homogeneous.

\para{Solution overview.} We first demonstrate that the colocation choice affects the aggregated communication time and further the inference time. Next, we prove that a colocation solution minimizing aggregated communication time will also minimize inference time ($\S$\ref{sec:colo_homo_1}). We determine the optimal expert colocation, which minimizes communication time, by solving the bottleneck matching problem ($\S$\ref{sec:colo_homo_2}).

\subsection{Minimizing inference time equals minimizing communication time}\label{sec:colo_homo_1}

\begin{wrapfigure}{r}{0.48\textwidth}
    \centering
    \vspace{-0.05in}
    \includegraphics[width=1\linewidth]{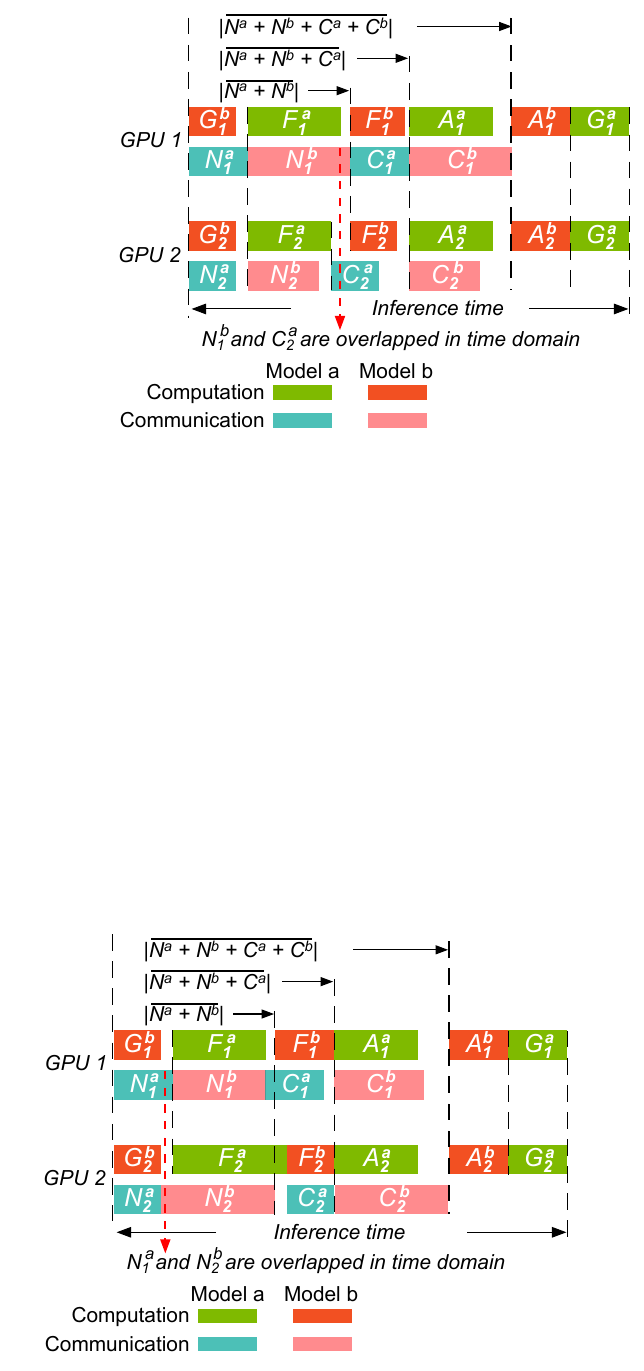}\vspace{-0.1in}
    \caption{Running colocating \moe models on homogeneous clusters.}\label{fig:moe_sharing_homo}
    \vspace{-0.2in}
\end{wrapfigure}

Fig.~\ref{fig:moe_sharing_homo} illustrates a scenario where two \moe models, $a$ and $b$, run simultaneously on a homogeneous cluster\footnote{We only colocate models with the same number of experts, even though it's not a strict requirement in theory.}. Components of Model $a$ are shown in shades of green, while those of Model $b$ are in shades of red. The subscript numbers (1 and 2) indicate the GPU index, and the superscript letters ($a$ and $b$) refer to the model index. For example, $G_1^b$ denotes the computation time of Model $b$'s Gate network on GPU 1.

The Colocating + Homogeneous scenario inherits characteristics of the Exclusive + Homogeneous case ($\S$\ref{sec:exclusive_homo}). These characteristics include: no need to decide GPU assignment on a homogeneous cluster, equal computation time for the Gate and Aggregation, and increased computation time for an expert when processing more tokens. Additionally, running two experts on the same GPU introduces new characteristics and constraints, as illustrated below.

\begin{enumerate}[label=(\arabic*), leftmargin=3em, topsep = 0.5em]
    \item \textit{Computation competition.} One model's computation components cannot start if another model is still under computing processes. 
    \item \textit{Communication overlapping.} The all-to-all communications from two models can overlap in the time domain. For instance, Model $a$'s all-to-all communication, $C_2^a$, can begin while Model $b$'s communication, $N_1^b$, on GPU 1 is still in progress.
\end{enumerate}

\para{Aggregated communication times.} The term $|\overline{N^a + N^b}|$ represents the time required to complete the first all-to-all communication for two models, which we refer to as the aggregated communication time. This differs from $|N^a| + |N^b|$, which simply adds the communication times of each model without considering potential overlap. As shown in Fig.~\ref{fig:moe_sharing_homo}, communications $N_1^a$ from Model $a$ and $N_2^b$ from Model $b$ overlap, resulting in $|\overline{N^a + N^b}|$ being smaller than $|N^a| + |N^b|$. $|\overline{N^a + N^b}|$ is impacted by the expert colocation choice. When colocating two experts, pairing one with high communication demands with another that has fewer tokens to send can reduce the aggregated communication time.

\para{Inference time expression.} Based on Fig.~\ref{fig:moe_sharing_homo}, we determine the finish time of each component, as shown in Table~\ref{tab:table1}. For simplicity, we display only the maximum start and end times across the $n$ GPUs for each component. For example, $|G^b|$ is defined as $\max(|G_i^b|)$ for $i \in [1, n]$, thereby omitting the GPU index subscript. Additionally, we use $E$ to denote the end time; for instance, $E_{A^b}$ indicates the end time of component $A^b$. The inference time corresponds to the end time of $G^a$, which is $E_{A^b} + |G^a|$, as shown below.

\vspace{-0.15in}
\begin{equation}\label{eqn:colocating_homo}
\textit{Inference time } \;\; t = E_{A^b} + |G^a|
\end{equation}
\vspace{-0.15in}

In Eqn.~\ref{eqn:colocating_homo}, $|G^a|$ is known in advance, while $E_{A^b}$, the end time of component $A^b$, is given by $\max(E_{A^a}, E_{C^b}) + |A^b|$. Both $E_{A^a}$ and $E_{C^b}$ can be further defined by the start and end times of other components. Following this approach, we can derive the complete expression for the inference time $t$, though it is not displayed here due to its complexity.

\begin{table}[tbp]
    \centering
    \footnotesize
    \caption{Start and end time of each component on the Colocating + Homogeneous scenario.}
    \vspace{-0.1in}
    \renewcommand{\arraystretch}{1.4} %
    \begin{tabular}{ccc}
        \toprule
        \textbf{Component} & \textbf{Start time} & \textbf{End time} \\
        \midrule
        $G^b$ & $0$ & $|G^b|$ \\       
        $N^a$ & $0$ & $|\overline{N^a}|$ \\   
        $F^a$ & $\max(|G^b|, |N^a|)$ & $\max(|G^b|, |N^a|) + |F^a|$ \\
        $N^b$ & $\geq |G^b|$ & $|\overline{N^a + N^b}|$ \\
        $F^b$ & $\max(E_{F^a}, E_{N^b})$ & $\max(E_{F^a}, E_{N^b}) + |F^b|$ \\
        $C^a$ & $\geq E_{F^a}$ & $|\overline{N^a + N^b + C^a}|$ \\
        $A^a$ & $\max(E_{F^b}, E_{C^a})$ & $\max(E_{F^b}, E_{C^a}) + |A^a|$ \\
        $C^b$ & $\geq E_{F^b}$ & $|\overline{N^a + N^b + C^a + C^b}|$ \\       
        $A^b$ & $\max(E_{A^a}, E_{C^b})$ & $\max(E_{A^a}, E_{C^b}) + |A^b|$ \\        
        $G^a$ & $E_{A^b}$ & $E_{A^b}$ + $|G^a|$ \\
        \bottomrule
    \end{tabular}
    \vspace{-0.1in}
    \label{tab:table1}
\end{table}

Rather than directly targeting inference time, we approach the problem by minimizing the aggregated communication time. In $\S$\ref{sec:exclusive_homo}, we minimize the communication time by determining an optimal transmission order using Theorem~\ref{thm:exclusive_homo_comm_time}. This is conducted under the context where traffic matrix is fixed. In contrast, colocation scenarios present different aggregated traffic matrices depending on the colocation choice. In the Colocating + Homogeneous scenario, minimizing communication time requires finding an expert colocation choice. The resulting traffic matrix achieves the shortest communication time when applying Theorem~\ref{thm:exclusive_homo_comm_time}.

\begin{theorem}\label{thm:colocating_homo_comm_time}
Minimizing aggregated all-to-all communication times of two colocating models ensures minimum inference time in a homogeneous cluster.
\end{theorem}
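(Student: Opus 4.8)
The plan is to show that the inference-time expression in Eqn.~\ref{eqn:colocating_homo}, once fully unfolded through Table~\ref{tab:table1}, is a monotone non-decreasing function of the four aggregated communication quantities $|\overline{N^a}|$, $|\overline{N^a+N^b}|$, $|\overline{N^a+N^b+C^a}|$, and $|\overline{N^a+N^b+C^a+C^b}|$, while every other term appearing in the recursion ($|G^a|,|G^b|,|F^a|,|F^b|,|A^a|,|A^b|$) is a constant independent of the colocation choice. Given monotonicity, a colocation that simultaneously minimizes all four aggregated communication terms must also minimize $t$; and by Theorem~\ref{thm:exclusive_homo_comm_time} the relevant minimum for each prefix traffic matrix is exactly its $b_{max}$ (max row/column sum), so minimizing the final aggregated all-to-all time $|\overline{N^a+N^b+C^a+C^b}|$ over colocation choices is the operative objective, with the intermediate terms handled along the way.

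The key steps, in order, are as follows. First I would substitute the end-time formulas from Table~\ref{tab:table1} into $t = E_{A^b}+|G^a|$ and expand $E_{A^b} = \max(E_{A^a},E_{C^b})+|A^b|$, then expand $E_{A^a}$ and $E_{C^b}$ in turn, continuing until $t$ is written purely in terms of the constants and the four barred quantities, using only the facts that $\max$ is monotone in each argument and that addition of a constant preserves monotonicity. Second, I would argue that the constants are genuinely colocation-invariant: $|G^a|,|G^b|,|A^a|,|A^b|$ are equal across GPUs and fixed (the Exclusive + Homogeneous observations (1)–(2) carry over), and $|F^a|,|F^b|$ — here meaning $\max_i|F_i^a|$ and $\max_i|F_i^b|$ — depend only on the maximum token load within each model, which is a property of the model, not of how the two models are paired. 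Third, I would invoke Theorem~\ref{thm:exclusive_homo_comm_time} to identify, for a fixed colocation, the achievable minimum of each aggregated term with the optimal transmission order, and note that the $C$-phase traffic matrices are the transposes of the $N$-phase ones (the reversed-all-to-all property), so a colocation that balances the first all-to-all also balances the second. Fourth, I would conclude that selecting the colocation minimizing the aggregated all-to-all communication time yields the minimum of every barred term and hence, by monotonicity, the minimum inference time.

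The main obstacle I anticipate is the third step: it is not immediately obvious that a single colocation choice can \emph{simultaneously} minimize all four prefix-sum aggregated terms $|\overline{N^a}| \le |\overline{N^a+N^b}| \le |\overline{N^a+N^b+C^a}| \le |\overline{N^a+N^b+C^a+C^b}|$. The term $|\overline{N^a}|$ involves only Model $a$ and is colocation-independent, so it drops out; the delicate part is whether minimizing $|\overline{N^a+N^b}|$ (which depends on pairing $a$'s senders with $b$'s senders) is compatible with minimizing the two longer prefixes that also fold in the $C$ traffic. Here the reversed-all-to-all structure is what saves the argument: because $\mathbb{D}_C$ is the transpose of $\mathbb{D}_N$ for each model, row sums and column sums swap roles, so the quantity $b_{max}$ governing each barred term is controlled by the same per-GPU aggregate load, and the colocation that equalizes loads for $N^a+N^b$ is precisely the one that equalizes them for the full $N^a+N^b+C^a+C^b$. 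I would make this precise by showing the per-GPU total (incoming plus outgoing, across both all-to-alls) is the single scalar being balanced, reducing all four objectives to one, which is then exactly the bottleneck-matching objective taken up in $\S$\ref{sec:colo_homo_2}.
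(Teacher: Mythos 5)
Your proposal is correct and reaches the paper's conclusion, but it is organized differently from the paper's argument, so a comparison is worthwhile. The paper proves Theorem~\ref{thm:colocating_homo_comm_time} by contradiction: it posits an alternative colocation whose communication end-times are all strictly larger ($E'_{N^b}>E_{N^b}$, $E'_{C^a}>E_{C^a}$, $E'_{C^b}>E_{C^b}$) yet whose inference time is smaller, and chains inequalities backwards through the recursion of Table~\ref{tab:table1} (Eqn.~\ref{eqn:proof}) until it forces $|G^b|'<|G^b|$, contradicting homogeneity. Your direct route --- unfold $t=E_{A^b}+|G^a|$, observe it is a composition of $\max$ and $+$ and hence monotone non-decreasing in the barred quantities, with every other term colocation-invariant --- is essentially the contrapositive of the same idea, and in two respects it is cleaner. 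First, you correctly treat $|F^a|$ and $|F^b|$ as constants (the maximum per-expert token load is fixed by the gate, not by how the two models are paired), whereas the paper instead asserts $|F^a|'>|F^a|$ on the grounds that ``computation time is proportional to communication time,'' an assumption your argument does not need. Second, you explicitly confront the point that the paper's contradiction setup quietly assumes away: an alternative colocation need not be worse in \emph{all} communication terms simultaneously, so one must show that a single pairing minimizes every prefix term at once. Your resolution --- $\mathbb{D}_C$ is the transpose of $\mathbb{D}_N$ for each model, so row and column sums swap roles and the same per-GPU bottleneck load governs every barred term, while $|\overline{N^a}|$ and $|\overline{C^a}|$ are colocation-independent --- is exactly what the paper establishes separately in \S\ref{sec:colo_homo_2}, where it notes $|\overline{C^a+C^b}|=|\overline{N^a+N^b}|$ and $|\overline{N^a+N^b+C^a}|=|\overline{N^a+N^b}|+|\overline{C^a}|$; folding that reduction into the proof itself, as you do, makes the theorem self-contained. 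In short, your version buys a tighter treatment of the simultaneity issue and avoids the proportionality assumption, at the cost of having to verify the monotone unfolding term by term; the paper's version is shorter but leaves those two points implicit.
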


\begin{proof}
We use proof by contradiction. Assume we have an optimal colocating strategy that minimizes communication times, resulting in an inference time of $t = E_{A^b}$ + $|G^a|$. Now, suppose there exists another colocating strategy with higher communication times but a shorter inference time: $E'_{N^b} > E_{N^b}, E'_{C^a} > E_{C^a}, E'_{C^b} > E_{C^b}$, and $t' < t$. According to Theorem~\ref{thm:exclusive_homo_comm_time}, the minimum communication time is determined solely by the maximum column or row sum. Thus, different GPU assignment solutions for Model $a$ do not affect this value. So we have $E'_{N^a} = |N^a|' = E_{N^a} = |N^a|$. In a homogeneous cluster, we have $|G^a|' = |G^a|, |G^b|' = |G^b|, |A^a|' = |A^a|,$ and $|A^b|' = |A^b|$. Since computation time is proportional to communication time in such a cluster, it follows that $|F^a|' > |F^a|$ and $|F^b|' > |F^b|$. We will now proceed with the proof by contradiction to show it is impossible to achieve a lower inference time. Specifically, we need to prove that $t' = E'_{A^b} + |G^a|' < t = E_{A^b} + |G^a|$ cannot hold.

\vspace{-0.10in}
\begin{equation}\label{eqn:proof}
\begin{split}
&|G^a|' = |G^a|, \; E'_{A^b} + |G^a|' < E_{A^b} + |G^a| \Rightarrow \max(E'_{A^a}, E'_{C^b}) + |A^b|' < \max(E_{A^a}, E_{C^b}) + |A^b|\\
&|A^b|' = |A^b|, \; E'_{C^b} > E_{C^b} \Rightarrow E'_{A^a} < E_{A^a} \Rightarrow \max(E'_{F^b}, E'_{C^a}) + |A^a|' < \max(E_{F^b}, E_{C^a}) + |A^a|\\
&|A^a|' = |A^a|,\; E'_{C^a} >  E_{C^a} \Rightarrow E'_{F^b} < E_{F^b} \Rightarrow \max(E'_{F^a}, E'_{N^b}) + |F^b|' < \max(E_{F^a}, E_{N^b}) + |F^b|\\
& |F^b|' > |F^b|, \; E'_{N^b} > E_{N^b} \Rightarrow E'_{F^a} < E_{F^a} \Rightarrow \max(|G^b|', |N^a|') + |F^a|' < \max(|G^b|, |N^a|) + |F^a|\\
&|F^a|' > |F^a|, \; |N^a|' = |N^a| \Rightarrow |G^b|' < |G^b|
\end{split}
\end{equation}
\vspace{-0.10in}

Eqn.~\ref{eqn:proof} demonstrates that $|G^b|' < |G^b|$, which contradicts our earlier assertion that $|G^b|' = |G^b|$. This contradiction validates the correctness of Theorem~\ref{thm:colocating_homo_comm_time}.
\end{proof}

\subsection{Finding optimal expert colocation}\label{sec:colo_homo_2}

Next, we focus on finding an expert colocation which minimizes the communication time. In Table~\ref{tab:table1}, for the terms related to communication times directly, the priority is to reduce $|\overline{N^a + N^b}|$. Since $E_{N^a} = |N^a|$ is unaffected by the expert colocation solution, and $|\overline{C^a + C^b}|$ equals $|\overline{N^a + N^b}|$, minimizing $|\overline{N^a + N^b}|$ also minimizes $|\overline{N^a + N^b + C^a + C^b}|$. Because $N^a$ and $C^a$ do not overlap in the time domain (they are separated by $F^a$), $|\overline{N^a + N^b + C^a}|$ can be expressed as $|\overline{N^a + N^b}| + |\overline{C^a}|$. This highlights the importance of optimizing $|\overline{N^a + N^b}|$. 

In the following, we discuss how to find the expert colocation to minimize $|\overline{N^a + N^b}|$. Suppose $\mathbb{D}_{N_1}$ and $\mathbb{D}_{N_2}$ represent the first all-to-all communication traffic matrices of Model $a$ and Model $b$, respectively. For each potential expert colocating choice, by combining the traffic matrices of $\mathbb{D}_{N_1}$ and $\mathbb{D}_{N_2}$, we create a new traffic matrix $\mathbb{D}_{new}$. $\mathbb{D}_{new}$ represents the aggregated traffic matrix of the two colocating models. According to Theorem~\ref{thm:exclusive_homo_comm_time}, the communication time with $\mathbb{D}_{new}$ is determined solely by the maximum sum of traffic within each column or row. In the following, we try to \textit{find the optimal expert colocating solution, which minimizes the maximum sum of each column or row among all possible $\mathbb{D}_{new}$}.

For traffic matrix $\mathbb{D}_{N_1}$, the sending and receiving traffic on GPU $i$ are denoted as $a_{i} =\sum_{j=1}^{n}d_{ij}$ and $a_{n+i} = \sum_{j=1}^{n}d_{ji}$, respectively. Utilizing a vector $\mathbf{a}$ = $[(a_{1}$, $a_{n+1})$, $(a_{2}$, $a_{n+2})$ ,..., $(a_{n}$, $a_{2n})]$, the sending/receiving traffic on $n$ GPUs is represented. Similarly, for $\mathbb{D}_{N_2}$, a vector $\mathbf{b}$ = $[(b_{1}$, $b_{n+1})$, $(b_{2}$, $b_{n+2})$ ,..., $(b_{n}$, $b_{2n})]$ is generated. For an expert colocating choice, an element is selected from both $\mathbf{a}$ and $\mathbf{b}$ each time, creating a new vector $\mathbf{h}$, which is shown in the following equation\footnote{Considering the synchronous all-to-all communication constraint in Eqn.~\ref{eqn:h}, we have $\mathbf{h} = \mathbf{a}_{i} + \mathbf{b}_{j} = (a_{i} + \max(b_{j}, G^b), a_{n+i} + \max(b_{n+j}, G^b))$. However, this adjustment does not impact the optimal expert colocating choice. For clarity in the following proof, we will continue to use Eqn.~\ref{eqn:h}.}. 

\begin{equation}\label{eqn:h}
\mathbf{h} = \mathbf{a}_{i} + \mathbf{b}_{j} = (a_{i} + b_{j}, a_{n+i} + b_{n+j}), \; i, j \in [1, n]
\end{equation}

The vector $\mathbf{h}$ represents the column/row sum of the traffic matrix $\mathbb{D}_{new}$. Next, we try to minimize the maximum value in $\mathbf{h}$.

\begin{wrapfigure}{r}{0.62\textwidth}
    \centering
    \vspace{-0.1in}
    \includegraphics[width=1\linewidth]{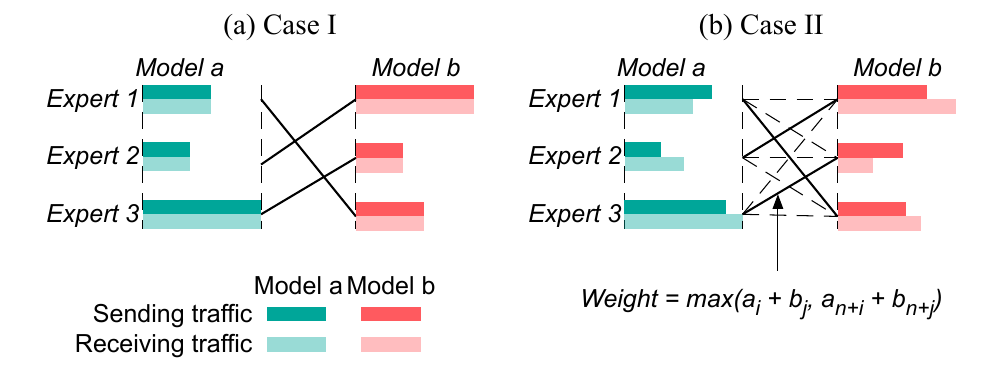}\vspace{-0.1in}
    \caption{(a) Case I: The optimal expert colocation solution is alternating between selecting one popular and one unpopular expert from Model $a$ and Model $b$. (b) Case II: Solving the \bomp yields the optimal expert colocation solution.}\label{fig:mwm_sharing_homo}
    \vspace{-0.2in}
\end{wrapfigure}

We minimize the maximum value in $\mathbf{h}$ under two cases. In Case I (Fig.~\ref{fig:mwm_sharing_homo}(a)), the quantity of sending traffic is equal to the receiving traffic for each GPU. In Case II (Fig.~\ref{fig:mwm_sharing_homo}(b)), the sending traffic may not necessarily be equal to the receiving traffic. Case I can be considered a specific instance of Case II. The reason for categorizing this problem into two cases is the availability of a lower-complexity algorithm tailored for Case I.

\indent \para{\textbf{Case I: The amount of sending traffic is equal to the receiving traffic for each GPU}}

For this particular case, we employ the following theorem to determine the optimal expert colocation solution.

\begin{theorem}\label{thm:a+b}
Vectors $\mathbf{a}$ and $\mathbf{b}$ are of the same sizes. Vector $\mathbf{h}$ is formed by adding the values selected each from $\mathbf{a}$ and $\mathbf{b}$. Sort $\mathbf{a}$ in ascending and $\mathbf{b}$ in descending order. Selecting values from $\mathbf{a}$ and $\mathbf{b}$ sequentially minimizes the maximum value in $\mathbf{h}$.
\end{theorem}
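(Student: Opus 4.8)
The plan is to prove Theorem~\ref{thm:a+b} by an exchange argument. We want to show that pairing the $i$-th smallest element of $\mathbf{a}$ with the $i$-th largest element of $\mathbf{b}$ minimizes $\max_i h_i$, where $h_i$ is the sum of the paired values. Since we are in Case~I, each "element" is really a scalar (sending traffic equals receiving traffic, so the pair $(a_i, a_{n+i})$ collapses to a single value, and likewise for $\mathbf{b}$); I would first state this reduction explicitly so that the rest of the argument deals with sorting scalars. After sorting, $a_1 \le a_2 \le \cdots \le a_n$ and $b_1 \ge b_2 \ge \cdots \ge b_n$, and the claimed optimal assignment yields the multiset $\{a_i + b_i\}_{i=1}^n$; call its maximum $M^\ast$.

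Next I would set up the exchange. Suppose $\pi$ is any permutation giving assignment $\{a_i + b_{\pi(i)}\}$, and suppose $\pi$ is not the sorted-against-sorted pairing. Then there exist indices $i < j$ with $\pi(i) < \pi(j)$, i.e. a pair that is "out of order" relative to the descending $\mathbf{b}$. Swapping the partners of $i$ and $j$ replaces the two sums $a_i + b_{\pi(i)}$ and $a_j + b_{\pi(j)}$ by $a_i + b_{\pi(j)}$ and $a_j + b_{\pi(i)}$. The key inequality is that the larger of the two new sums does not exceed the larger of the two old sums: since $a_i \le a_j$ and $b_{\pi(j)} \le b_{\pi(i)}$, we have $a_j + b_{\pi(i)} \ge a_i + b_{\pi(i)}$ and $a_j + b_{\pi(i)} \ge a_j + b_{\pi(j)}$, so $a_j + b_{\pi(i)} = \max$ of the old pair, and $\max(a_i + b_{\pi(j)},\, a_j + b_{\pi(i)}) \le a_j + b_{\pi(i)} \le \max(a_i + b_{\pi(i)},\, a_j + b_{\pi(j)})$ — wait, more carefully: the new pair's max is $\max(a_i + b_{\pi(j)}, a_j + b_{\pi(i)})$, and $a_j + b_{\pi(i)}$ is already one of the old sums, while $a_i + b_{\pi(j)} \le a_j + b_{\pi(i)}$; hence the new max equals $a_j + b_{\pi(i)}$, which is $\le$ the old max. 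So each such swap does not increase $\max_i h_i$. A standard argument (bubble-sort-style: finitely many such swaps convert any $\pi$ into the sorted-against-sorted pairing, each step non-increasing in the objective) then shows $\max_i h_i(\pi) \ge M^\ast$ for every $\pi$, proving optimality.

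The main obstacle — and the only place one must be slightly careful — is the claim that every swap is weakly improving even when $a_j + b_{\pi(i)}$ is \emph{not} the global maximum of the whole assignment: one must argue that the global maximum over all $n$ sums cannot increase, which follows because the $n-2$ unchanged sums are untouched and the changed pair's maximum weakly decreases, so $\max$ over all $n$ is a max of quantities each of which weakly decreased or stayed equal. I would also make sure the termination of the swap process is justified (e.g. by noting each swap strictly decreases the number of inversions between $\pi$ and the identity pairing on the sorted lists), so the chain of inequalities is finite. I do not expect the ties (equal values in $\mathbf{a}$ or $\mathbf{b}$) to cause trouble, since the inequalities above are all non-strict; I would remark on this briefly rather than belabor it.
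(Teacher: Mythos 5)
Your overall strategy---an exchange (bubble-sort) argument on inversions---is a valid and genuinely different route from the paper's proof. The paper argues by contradiction directly at the bottleneck index $k$ where $a_k + b_{n-k+1}$ attains the maximum: a counting/pigeonhole argument shows that any matching with a smaller maximum would be forced to pair $\mathbf{a}_{[k+1,n]}$ entirely with $\mathbf{b}_{[1,n-k]}$, leaving $a_k$ to be paired with some value at least $b_{n-k+1}$, a contradiction. Your route instead converts an arbitrary permutation into the anti-sorted pairing by local swaps, each weakly decreasing the objective; done correctly it is arguably cleaner about ties and about why the \emph{global} maximum (not just the maximum of the swapped pair) cannot increase, which you rightly flag as the point needing care.

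However, the key exchange step as written is internally inconsistent because you picked the inversion condition backwards. With $a_1 \le \cdots \le a_n$ and $b_1 \ge \cdots \ge b_n$, the target pairing is the identity permutation, so any $\pi \ne \mathrm{id}$ must contain indices $i<j$ with $\pi(i) > \pi(j)$ --- not $\pi(i) < \pi(j)$ as you wrote. Under your condition $\pi(i) < \pi(j)$ the pair is already correctly ordered (the smaller $a_i$ sits with the larger $b_{\pi(i)}$), the quantity $a_j + b_{\pi(i)}$ is one of the \emph{new} sums rather than an old one, and your own inequalities $a_j + b_{\pi(i)} \ge a_i + b_{\pi(i)}$ and $a_j + b_{\pi(i)} \ge a_j + b_{\pi(j)}$ show that this swap weakly \emph{increases} the pair maximum --- the opposite of what you conclude. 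The fix is mechanical: take an inversion $i<j$ with $\pi(i) > \pi(j)$, so that $b_{\pi(i)} \le b_{\pi(j)}$; then both post-swap sums $a_i + b_{\pi(j)}$ and $a_j + b_{\pi(i)}$ are bounded above by $a_j + b_{\pi(j)}$, which \emph{is} one of the pre-swap sums, so the pair maximum (and hence the global maximum, since the other $n-2$ sums are untouched) weakly decreases; each such swap strictly reduces the inversion count, so the process terminates at the identity. With that correction your argument goes through.
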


\begin{proof}
\begin{equation}\label{eqn:a+b}
\left(\begin{NiceArray}{ccc|c|ccc}
a_{1} & \cdots & a_{k-1} & a_{k} & a_{k+1} & \cdots & a_{n}\\ 
b_{n} & \cdots & b_{n-k+2} & b_{n-k+1} & b_{n-k} & \cdots &b_{1}
\end{NiceArray}\right)
\end{equation}

Assuming $\mathbf{a}$ and $\mathbf{b}$ are sorted in ascending and descending order, respectively, denoted as $a_{1} \leq a_{2} \leq...\leq a_{n}$ and $b_{n} \geq b_{n-1} \geq...\geq b_{1}$, as depicted in Eqn.~\ref{eqn:a+b}. By summing the elements from $\mathbf{a}$ and $\mathbf{b}$ in a sequential manner, we obtain $\mathbf{h} = [a_{1}+b_{n}, a_{2}+b_{n-1}, ..., a_{n}+b_{1}]$. Without loss of generality, let's assume the $k_{th}$ item of $\mathbf{h}$, which is $a_{k}+b_{n-k+1}$, serves as the maximum value. Our next objective is to demonstrate that it is impossible to rearrange $\mathbf{a}$ and $\mathbf{b}$ to create a new vector $\mathbf{h}$, where the maximum value is smaller than $a_{k}+b_{n-k+1}$.

Let's consider the subarray $\mathbf{a}_{[k+1, n]} = [a_{k+1}, a_{k+2}, ..., a_{n}] \subseteq \mathbf{a}$. For any value $a_{i} \in $ $\mathbf{a}_{[k+1, n]}$,  it is apparent that $a_{i} \geq a_{k}$. To maintain the maximum value of $\mathbf{h}$ below $a_{k}+b_{n-k+1}$, $a_{i}$ must be paired with a value smaller than $b_{n-k+1}$. For any value $b_{i} \in \mathbf{b}_{[1, n-k]} = [b_{1}, b_{2}, ..., b_{n-k}]$, it holds that $b_{i} \leq b_{n-k+1}$. Therefore, the values from $\mathbf{a}_{[k+1, n]}$ must be matched with the values from $\mathbf{b}_{[1, n-k]}$. Similarly, the subarray $\mathbf{a}_{[1, k-1]} = [a_{1}, a_{2}, ..., a_{k-1}] \subseteq \mathbf{a}$  must be matched with the values from $\mathbf{b}_{[n-k+2, n]} = [b_{n-k+2}, b_{n-k+3}, ..., b_{n}] \subseteq \mathbf{b}$. Now the elements from $\mathbf{a}_{[k+1, n]}$ are paired with elements from $\mathbf{b}_{[1, n-k]}$, and $\mathbf{a}_{[1, k-1]}$ is paired with $\mathbf{b}_{[n-k+2, n]}$. With only $a_{k}$ and $b_{n-k+1}$ remaining, they must be paired together, obtaining $a_{k}+b_{n-k+1}$. In this scenario, the maximum value of the new vector cannot be less than $a_{k}+b_{n-k+1}$.
\end{proof}

The idea behind Theorem~\ref{thm:a+b} is to alternate between selecting one large and one small value from $\mathbf{a}$ and $\mathbf{b}$. This means we should colocate a popular expert from Model $a$ and an unpopular expert from Model $b$. This strategy reduces the aggregated communication time on that GPU where these two experts are colocated.

\indent \para{\textbf{Case II: The amount of sending traffic is not equal to the receiving traffic}}

In Case II, where $a_{i} \neq a_{n+i}$ and $b_{j} \neq b_{n+j}$, Theorem~\ref{thm:a+b} is not applicable. This is because we cannot sort $\mathbf{a}$ and $\mathbf{b}$ where one element contains two distinct values. Attempting to sort $\mathbf{a}$ and $\mathbf{b}$ according to the larger value inside an element and then applying Theorem~\ref{thm:a+b} does not minimize the maximum value in $\mathbf{h}$.

We reformulate the problem as a matching problem. In Fig.~\ref{fig:mwm_sharing_homo}(b), we construct the graph as follows: the experts of Model $a$ and Model $b$ are represented as nodes on the left and right sides of the bipartite graph, respectively. Each node on the left is connected to every node on the right with an edge weighted by $max(a_{i} + b_{j}$,  $a_{n+i} + b_{n+j})$, where $i, j \in [1, n]$. This creates a fully connected bipartite graph with $n^2$ edges. The weight of each edge indicates the maximum amount of data transmitted (sent or received) by a GPU if the corresponding experts from each model are colocated.

There is a direct one-to-one correspondence between the mappings of sequences $\mathbf{a}$ and $\mathbf{b}$ and perfect matchings in the constructed bipartite graph. A perfect matching is one that covers all nodes. Finding an optimal sequence mapping is thus equivalent to identifying a perfect matching that minimizes the maximum edge weight. This problem is known as the bottleneck matching problem~\cite{bottleneck}. The algorithm for solving the bottleneck matching problem is straightforward. It involves a binary search~\cite{knuth1998art} on the sorted array of edges to find the minimum weight $w_{min}$ such that a perfect matching exists in the subgraph induced by all edges with weights not exceeding $w_{min}$. The existence of a perfect matching in a bipartite graph can be verified using the Hopcroft-Karp algorithm~\cite{hop1973kraft}, which has a complexity of $O(n^{2}\sqrt n)$. Combined with binary search, the overall complexity is $O(n^{2}\sqrt nlogn)$.

With the optimal expert colocation solution determined, we can derive the communication time and the corresponding optimal inference time as outlined in Table~\ref{tab:table1}.

In summary, this section outlines the optimal expert colocation solution to minimize inference time on homogeneous clusters. We demonstrate that optimizing the communication time for colocated models is crucial. Using Theorem~\ref{thm:a+b} and solving the bottleneck matching problem, we identify the expert colocation solution that achieves the minimum inference time.

\begin{tcolorbox}[colback=blue!3, colframe=blue!30!black, boxrule=0.25mm, width=\linewidth, title=Takeaway 3,  colbacktitle=blue!40,   %
coltitle=black,         %
fonttitle=\bfseries\itshape,
]
\begin{itemize}
    \item Expert colocation choices impact both the aggregated communication time and, consequently, the inference time.
    \item Minimizing aggregated communication time ensures minimum inference time in a homogeneous cluster.
    \item \name identifies the optimal expert colocation by solving the bottleneck matching problem, thus achieving the minimum inference time.
\end{itemize}
\end{tcolorbox}

\section{Colocating Models on Heterogeneous Clusters}\label{sec:colocating_hetero}

In this section, we focus on colocating models on heterogeneous clusters. Achieving minimum inference time in the Colocating + Heterogeneous scenario requires expert colocation, GPU assignment, and communication scheduling.

\para{Solution overview.} We first identify that optimizing inference time in the Colocating + Heterogeneous scenario is an NP-hard problem ($\S$\ref{sec:colo_hetero_1}), and then we propose a sub-optimal yet effective solution ($\S$\ref{sec:colo_hetero_2}).

\subsection{NP-hardness proof}\label{sec:colo_hetero_1}

Fig.~\ref{fig:sharing_hetero_a} illustrates the case of running two \moe models on a heterogeneous cluster. Similar to the Colocating + Homogeneous scenario, the inference time can be expressed using Eqn.~\ref{eqn:colocating_homo}, with the finish times for each component detailed in Table~\ref{tab:table1}.

\begin{wrapfigure}{r}{0.44\textwidth}
    \centering
    \vspace{-0.15in}
    \includegraphics[width=1\linewidth]{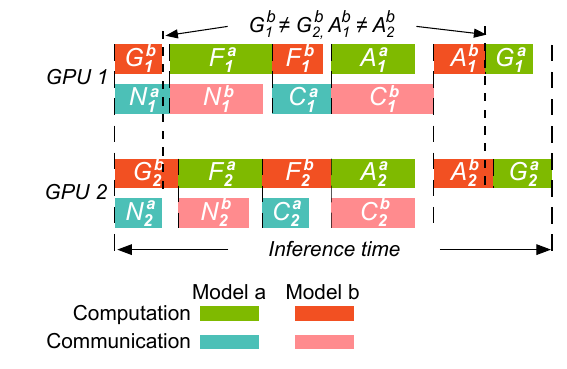}\vspace{-0.1in}
    \caption{Running colocating \moe models on heterogeneous clusters.}\label{fig:sharing_hetero_a}
    \vspace{-0.10in}
\end{wrapfigure}

In $\S$\ref{sec:colocating_homo}, Theorem~\ref{thm:colocating_homo_comm_time} demonstrates that minimizing aggregated communication times ensures the minimum inference time on a homogeneous cluster. However, this theorem does not apply to a heterogeneous cluster. In the homogeneous environment, computation times are identical across GPUs. Therefore, we have $|G^a|' = |G^a|, |G^b|' = |G^b|, |A^a|' = |A^a|,$ and $|A^b|' = |A^b|$ in the proof of Theorem~\ref{thm:colocating_homo_comm_time}. Additionally, we apply $|F^a|' > |F^a|$ and $|F^b|' > |F^b|$, indicating that computation time is proportional to communication time. However, these equations and inequalities do not hold in a heterogeneous cluster. As demonstrated in Fig.~\ref{fig:sharing_hetero_a}, $G_1^b \neq G_2^b$, $A_1^b \neq A_2^b$, rendering Theorem~\ref{thm:colocating_homo_comm_time} inapplicable in such heterogeneous environments.

We can reformulate the optimization problem as a 3-dimensional matching problem, as illustrated in Fig.~\ref{fig:sharing_hetero_b}(a). Unlike the scenario depicted in Fig.~\ref{fig:mwm_sharing_homo}, this formulation requires both expert colocation and GPU assignment. The 3-dimensional matching problem extends bipartite matching (also known as 2-dimensional matching). A hyperedge, connecting one GPU and one expert from Model $a$ and one expert from Model $b$, represents the inference time occurring on that GPU. We must determine two perfect matchings among two bipartite graphs. Similar to the bottleneck matching problem applied in Case II ($\S$\ref{sec:colocating_homo}), we need to find a perfect matching that minimizes the maximum weight. The 3-dimensional matching problem is proven to be NP-hard~\cite{3DM}, meaning that we cannot solve the optimization problem in polynomial time.

\subsection{Sub-optimal approach}\label{sec:colo_hetero_2}

We use a sub-optimal solution by decoupling the matchings in the two bipartite graphs.

\begin{figure}[tb]
    \centering
    \includegraphics[width=1\linewidth]{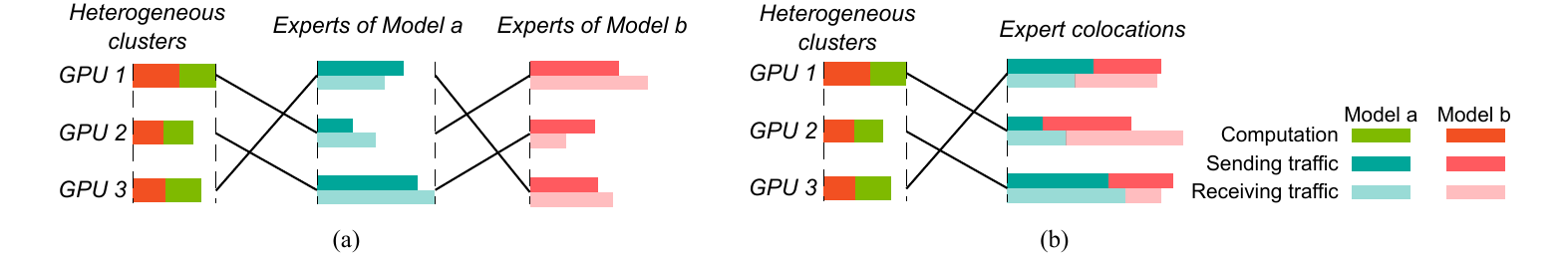}%
    \caption{(a) Optimal expert colocation and GPU assignment solution is obtained by solving a 3-dimensional matching problem. (b) We can reduce the 3-dimensional matching problem to two 2-dimensional matching problems.}\label{fig:sharing_hetero_b}
    \vspace{-0.2in}
\end{figure}

We first determine the perfect matching among experts, setting aside GPU assignment initially. Following the method described in Case II ($\S$\ref{sec:colocating_homo}), we solve the bottleneck matching problem to obtain the expert colocation solution. This reduces the 3-dimensional matching problem to a 2-dimensional matching problem. In Fig.~\ref{fig:sharing_hetero_b}(b), the left side represents GPUs, and the right side represents the combination of two experts, with the edge weight indicating inference time on the connected GPU. We resolve the bottleneck matching problem to determine the minimum of the maximum weights. Combined with the expert colocation solution, this provides a complete, sub-optimal solution.

In conclusion, achieving minimum inference time in the Colocating + Heterogeneous scenario can be formulated as a 3-dimensional matching problem, which is proven to be NP-hard~\cite{3DM}. Based on our evaluation in $\S$\ref{sec:evaluation}, this solution achieves an inference time just 1.07$\times$ of the optimal.

\begin{tcolorbox}[colback=blue!3, colframe=blue!30!black, boxrule=0.25mm, width=\linewidth, title=Takeaway 4,  colbacktitle=blue!40,   %
coltitle=black,         %
fonttitle=\bfseries\itshape,
]
\begin{itemize}
    \item In a  heterogeneous cluster, minimizing aggregated all-to-all communication times of two colocating models does not ensure minimum inference time.
    \item Minimizing inference time in the Colocating + Heterogeneous scenario can be formulated as an NP-hard matching problem.
    \item We propose a sub-optimal approach by decoupling the optimization problem into two perfect matching problems.
\end{itemize}
\end{tcolorbox}

\section{Evaluation}\label{sec:evaluation}

The evaluation seeks to address the following key questions.

\para{Q1: Can \name reduce inference time across four scenarios?} \name achieves up to 1.38$\times$ faster inference time in the Exclusive + Homogeneous scenario and up to 1.81$\times$ faster in the Exclusive + Heterogeneous scenario. In the colocating scenario, \name shows an improvement of up to 2.38$\times$ in the homogeneous case and up to 3.54$\times$ in the heterogeneous case.

\para{Q2: Can \name improve GPU utilization?} In the colocation scenario, \name delivers a 1.28$\times$ to 1.50$\times$ improvement in GPU utilization compared to the state-of-the-art solution.

\para{Q3: How close is \name to the optimum in the Colocating + Heterogeneous scenario?} On average, \name prolongs the inference time by only 1.07$\times$ compared to the optimum.

\para{Q4: How does \name perform under imprecise traffic inputs?} \name maintains inference time performance under unpredictable inference requests, with only a 15.8\% degradation.

\subsection{Simulation setup}\label{sec:setup}

\para{GPU clusters.} The GPUs are connected through a large switch, as shown in Fig.~\ref{fig:network}(a). In homogeneous clusters, the network bandwidth is set to $\uGbps{100}$. For heterogeneous clusters, we define four types of GPUs, with bandwidths of $\uGbps{100}$, $\uGbps{80}$, $\uGbps{50}$, and $\uGbps{40}$, ordered from highest to lowest performance. The number of GPUs for each type is the same. In the exclusive scenario, each \moe model uses the network bandwidth independently. In the colocation scenario, models only compete bandwidth when their experts are placed on the same device.

\para{\moe models.}
We use production model statistics from Google~\cite{limoe} to drive our simulation. It includes data for four layers of two \moe models, B/16 and B/32, each with 8 experts. We derive \name's input parameters from the model information based on the COCO and ImageNet datasets.

\para{Metrics.} We consider the following metrics in the evaluation.

\begin{itemize}
\renewcommand{\labelitemi}{\scriptsize$\bullet$}
    \item \textit{Inference time}. We calculate the inference time for all four scenarios.
    \item \textit{GPU utilization}. GPU utilization is the ratio of computation time (including the Gate, FFN, and Aggregation) to the inference time.
\end{itemize}

\para{Baselines.} \name is the first of its kind, making it difficult to find directly comparable work. For expert colocation, we compare \name with Lina~\cite{li2023accelerating}, the latest approach using expert colocation. We also implement vanilla expert colocation, referred to as random expert colocation (REC), as the baseline. To ensure fairness, all solutions colocate two experts on the same device. Lina\footnote{Lina consists of three main components: prioritizing all-to-all over all-reduce, pipelining communication and computation, and packing multiple experts on a single device. The first component is specific to \moe training and does not apply to \name. The second complements \name, while the third is closely related. We implement the third component for Lina.} pairs the most popular expert with the least popular one within each job, while \name and REC colocate experts from two different models.

For GPU assignment in heterogeneous clusters, we use the vanilla approach, random GPU assignment (RGA), as the baseline.

For all-to-all communication scheduling, we employ the shortest job first (SJF), which is a well-known flow scheduling policy for minimizing average flow completion time. We also include the vanilla method, random communication scheduling (RCS).

\subsection{Results}\label{sec:results}

\begin{figure*}[t]
    \centering
    \begin{minipage}[t]{0.495\linewidth}
        \centering
        \includegraphics[width=\linewidth]{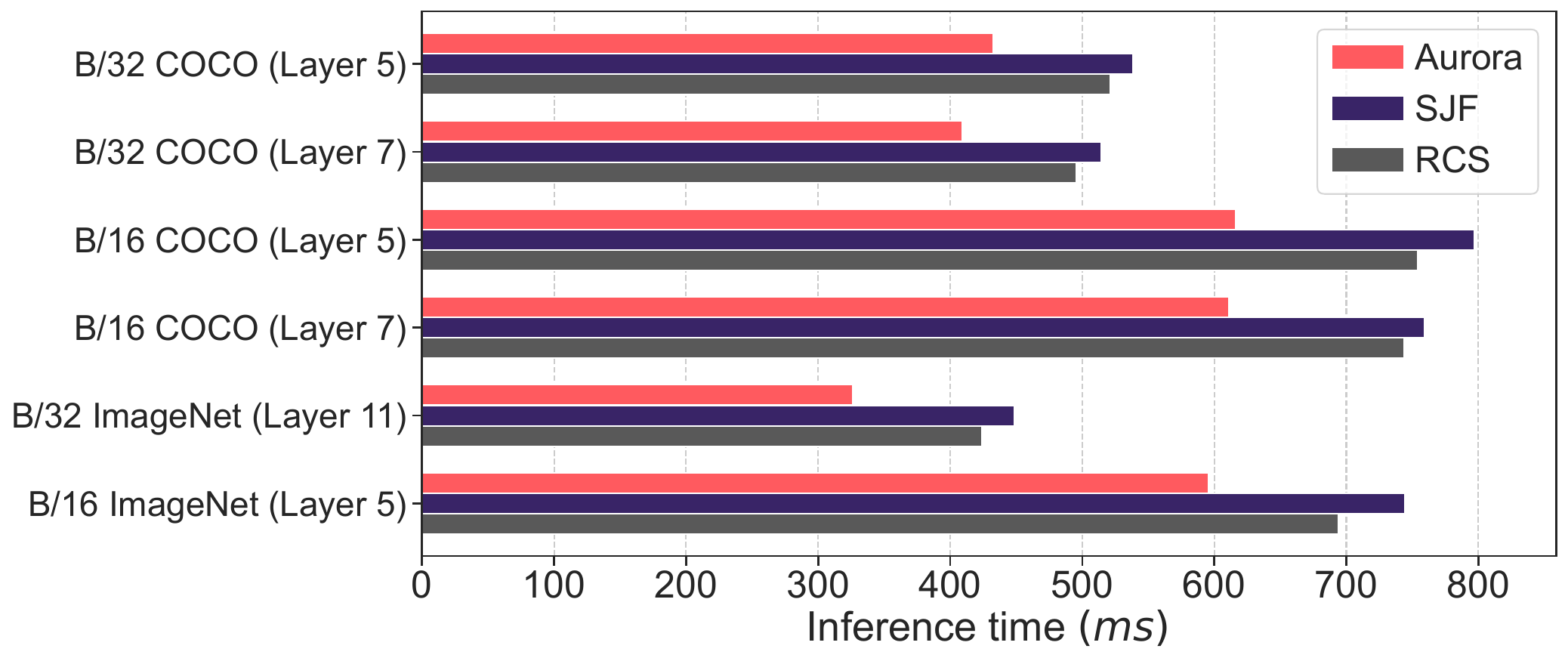}\vspace{-0.05in}
        \subcaption{}\label{fig:infer_time_case_1}
    \end{minipage}%
    \hfill
    \begin{minipage}[t]{0.495\linewidth}
        \centering
        \includegraphics[width=\linewidth]{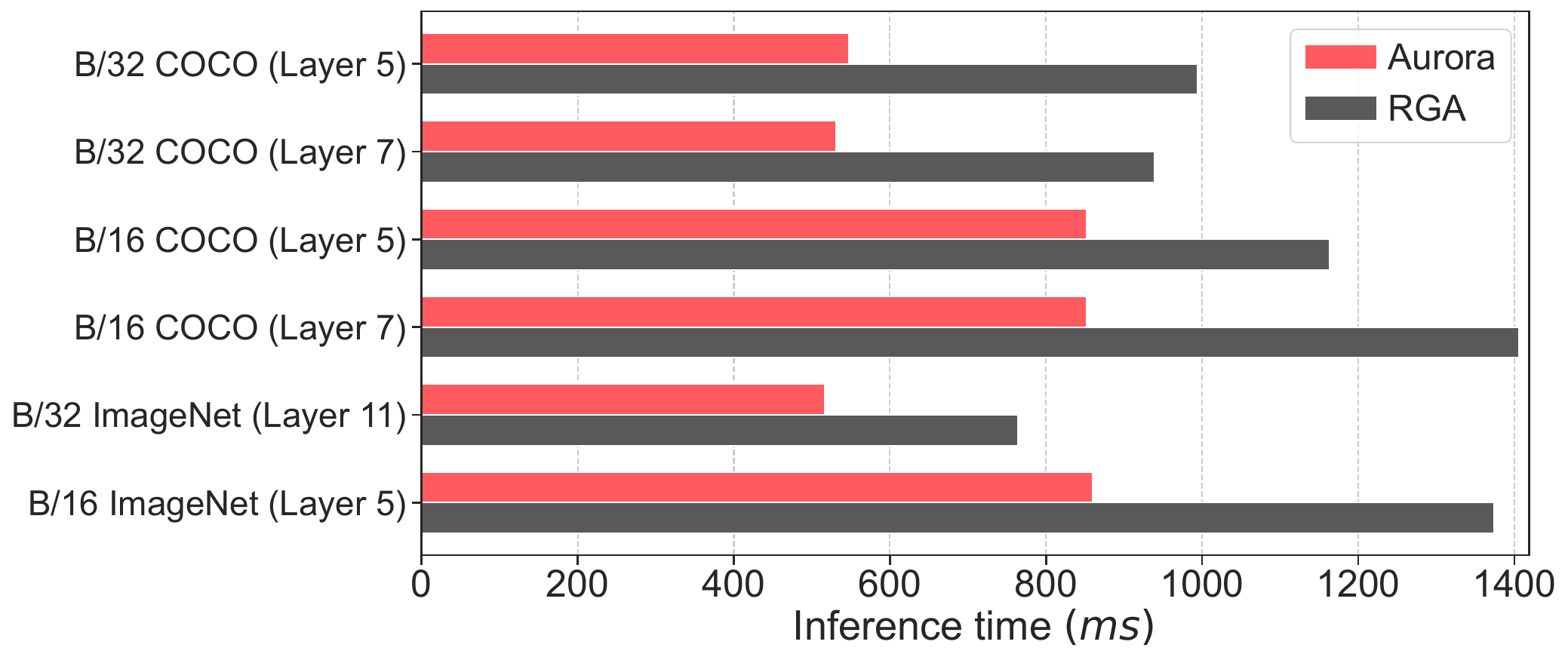}\vspace{-0.05in}
        \subcaption{}\label{fig:infer_time_case_2}
    \end{minipage}%
    \vspace{0.15cm} %
    \begin{minipage}[t]{0.495\linewidth}
        \centering
        \includegraphics[width=\linewidth]{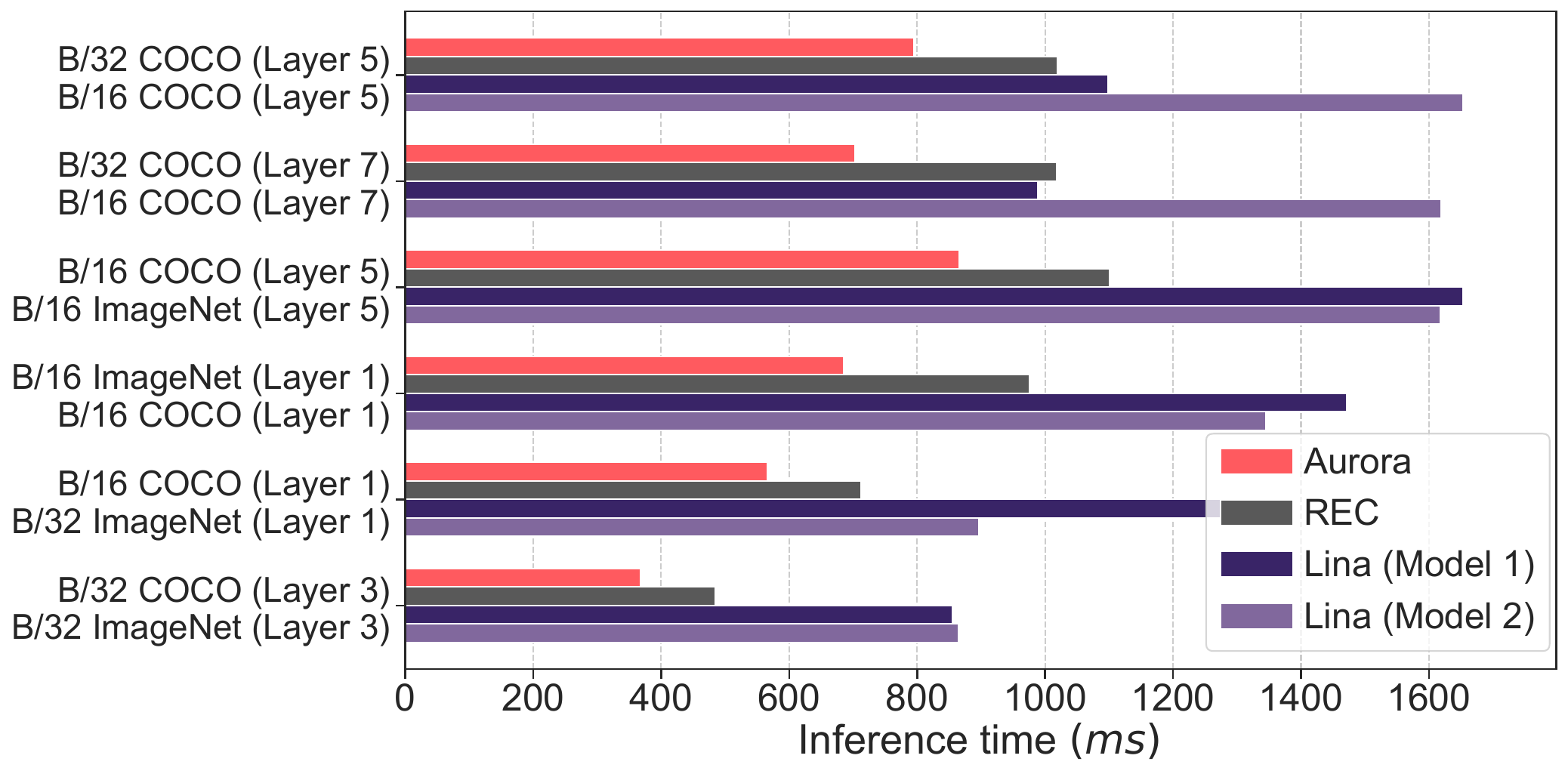}\vspace{-0.05in}
        \subcaption{}\label{fig:infer_time_case_3}
    \end{minipage}%
    \hfill
    \begin{minipage}[t]{0.495\linewidth}
        \centering
        \includegraphics[width=\linewidth]{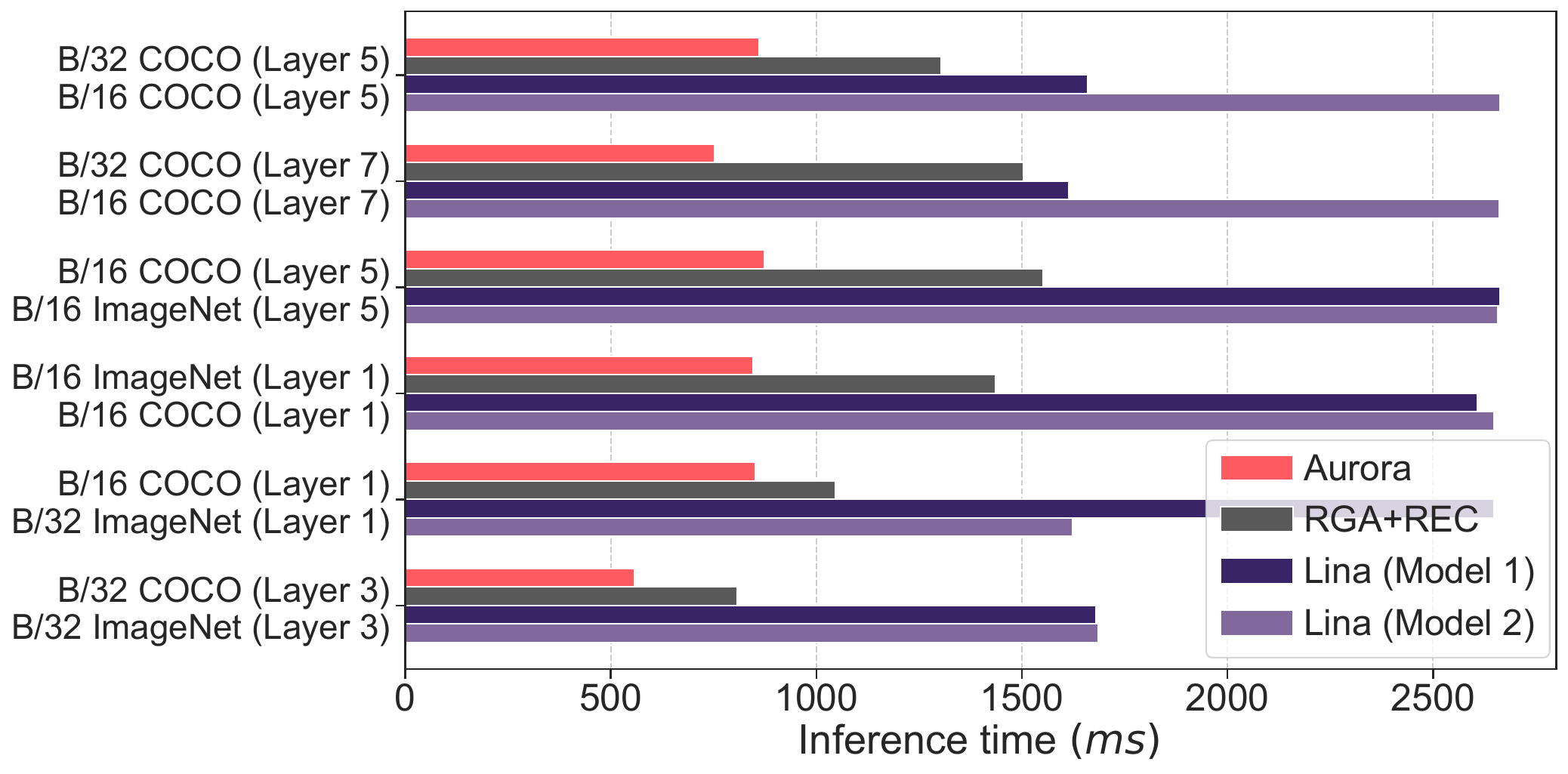}\vspace{-0.05in}
        \subcaption{}\label{fig:infer_time_case_4}
    \end{minipage}%
    \vspace{-0.15in}
    \caption{Inference time comparison in (a) Exclusive + Homogeneous, (b) Exclusive + Heterogeneous, (c) Colocating + Homogeneous, and (d) Colocating + Heterogeneous scenarios.}
    \label{fig:infer_time}
    \vspace{-0.15in}
\end{figure*}

\begin{figure*}[t]
    \begin{minipage}[t]{1\linewidth}
    \centering
        \begin{minipage}[t]{0.495\linewidth}
        \includegraphics[width=\linewidth]
        {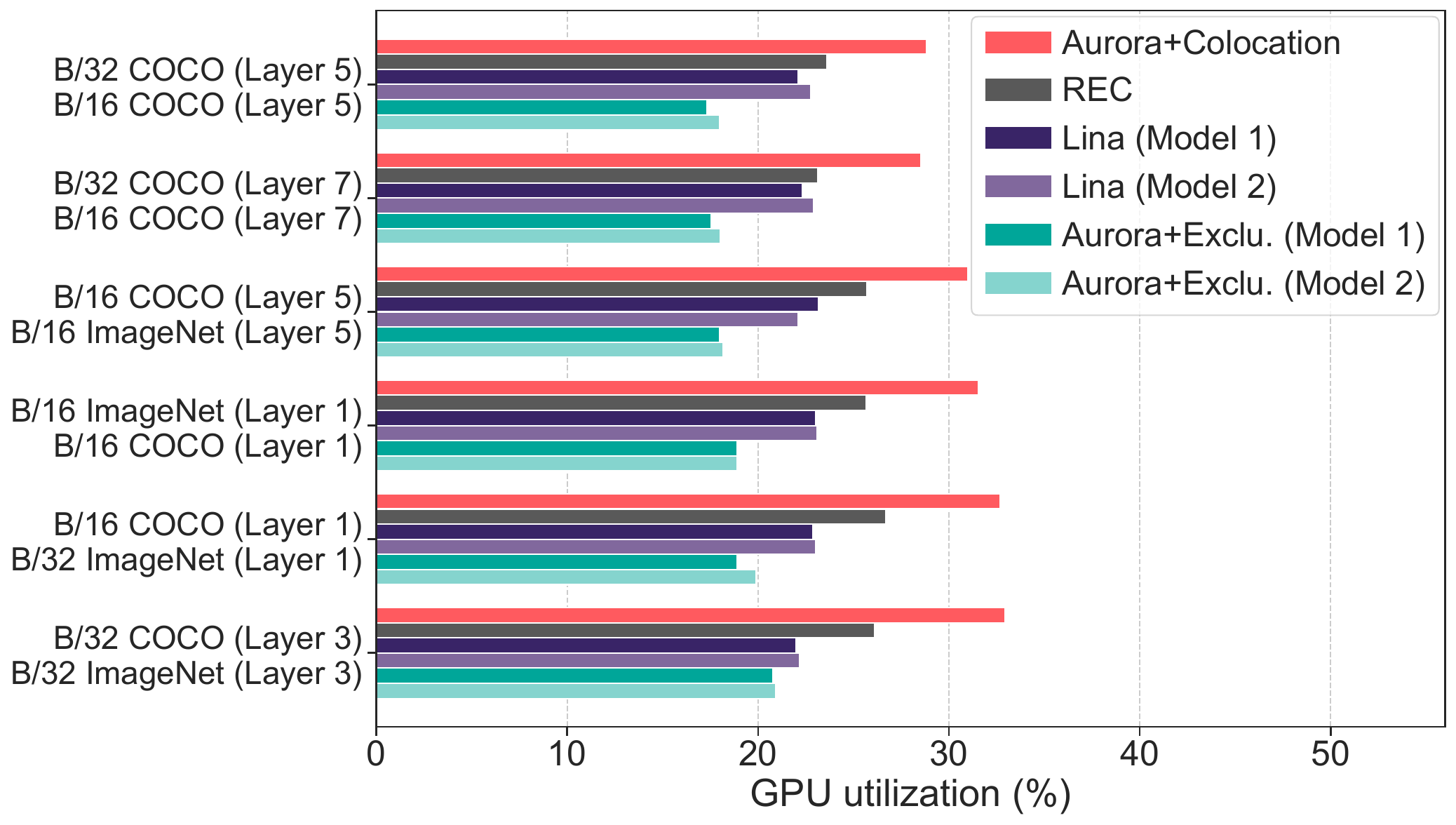}\vspace{-0.10in}
        \subcaption{}\label{fig:gpu_util_case3}
        \vspace{-0.1in}
        \end{minipage}%
        \hfill
        \begin{minipage}[t]{0.495\linewidth}
        \includegraphics[width=\linewidth]
        {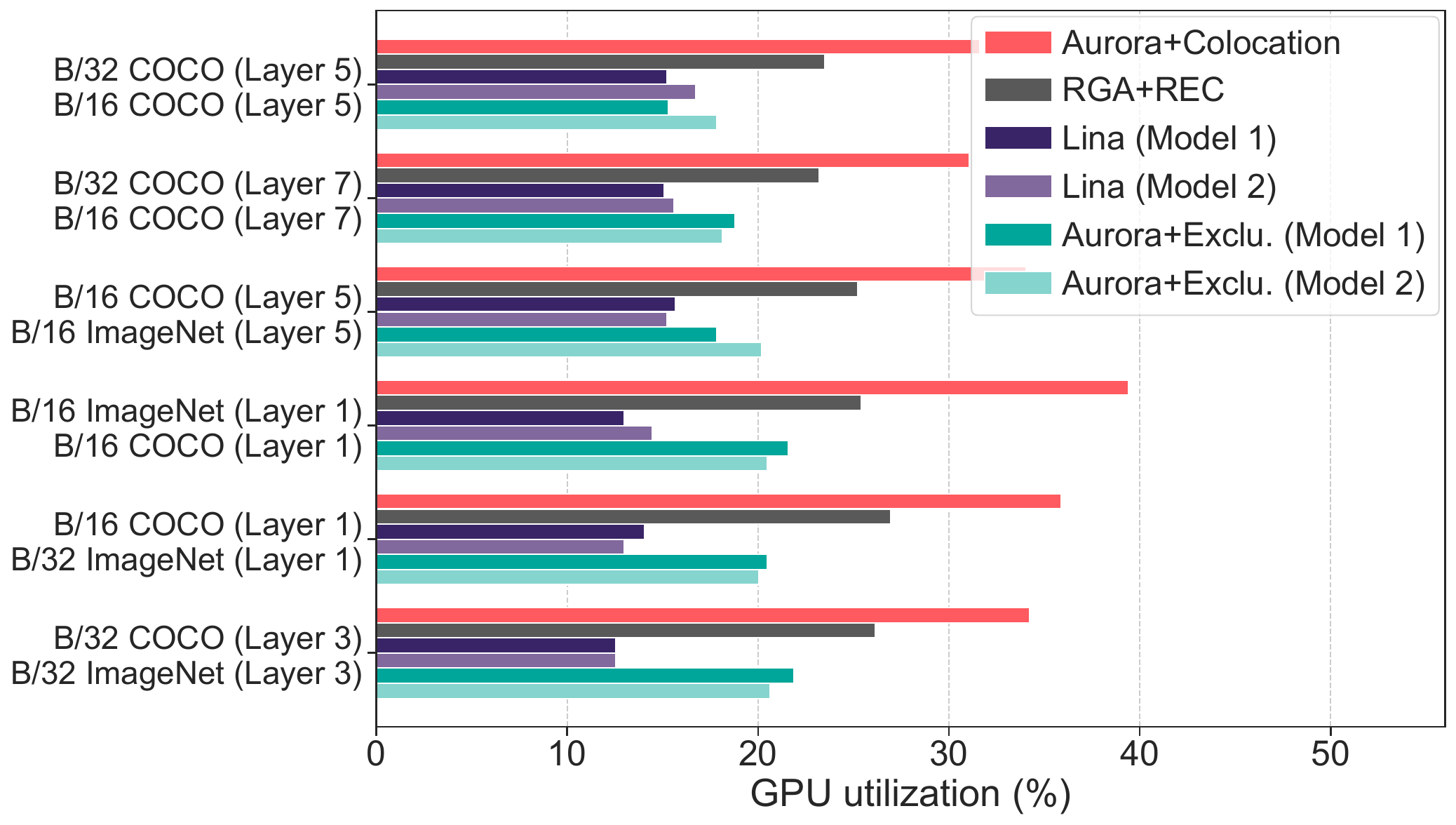}\vspace{-0.10in}
        \subcaption{}\label{fig:gpu_util_case4}
        \vspace{-0.1in}
        \end{minipage}%
    \caption{GPU utilization in the (a) Colocating + Homogeneous and (b) Colocating + Heterogeneous scenarios.}\label{fig:gpu_util}
    \end{minipage}
    \vspace{-0.20in}
\end{figure*}

\para{(Q1) \name reduces inference time across four scenarios.} We evaluate inference time across various scenarios. Fig.~\ref{fig:infer_time_case_1} shows a comparison of inference times for three scheduling algorithms: \name, SJF, and RCS. These algorithms decide the order of token transmission between GPUs during all-to-all communication, based on the traffic matrix of each layer. \name consistently outperforms both SJF and RCS across all model layers and datasets, achieving communication times that are up to 1.38$\times$ faster compared to SJF. This demonstrates its efficiency in minimizing communication time. In contrast, SJF shows performance similar to or even worse than RCS. This is because prioritizing tokens with less traffic offers no advantage. SJF's inability to reduce bandwidth contention results in outcomes that are nearly identical to those of RCS.

In Fig.~\ref{fig:infer_time_case_2}, we present a comparison of inference times between \name and RGA in the Exclusive + Heterogeneous scenario. With \name, inference times are accelerated by 1.36$\times$ to 1.81$\times$ across various models and layers. This is achieved by assigning popular experts to high-end GPUs, optimizing overall inference performance.

Furthermore, Fig.~\ref{fig:infer_time_case_3} illustrates the inference time when two experts are colocated on the same GPU. Lina colocates experts from the same model, and we show the inference time for each model separately. \name consistently achieves the shortest inference time compared to Lina, REC, and RGA + REC. Under the homogeneous case, \name is 1.25$\times$ to 2.38$\times$ faster than Lina, while in the heterogeneous scenario (Fig.~\ref{fig:infer_time_case_4}), it improves by 1.91$\times$ to 3.54$\times$. \name places experts from two different models, allowing them to avoid the synchronous all-to-all communication constraint. In contrast, with Lina, colocated experts must wait for each other to complete communication, which can lead to longer inference times.

\para{(Q2) \name improves GPU utilization.} Fig.~\ref{fig:gpu_util_case3} illustrates GPU utilization in the homogeneous case. \name + Colocation refers to placing two experts on the same GPU, while \name + Exclusive represents assigning one expert per GPU. GPU utilization is notably low when running \moe models exclusively, with most models below 20\%. By colocating two experts on a single device, \name achieves a 1.57$\times$ to 1.72$\times$ increase in GPU utilization. In general, colocating two experts is expected to nearly double GPU utilization, but the observed improvement is lower. This is because, with multiple experts sharing a GPU, inference time for \name + Colocation is longer compared to \name + Exclusive, reducing the potential GPU utilization gains. \name achieves a significant improvement over Lina, with an increase of 1.28$\times$ to 1.50$\times$. A similar trend is observed in the heterogeneous case, as shown in Fig.~\ref{fig:gpu_util_case4}.

\begin{wrapfigure}{r}{0.55\textwidth}
    \centering
    \vspace{-0.15in}
    \includegraphics[width=1\linewidth]{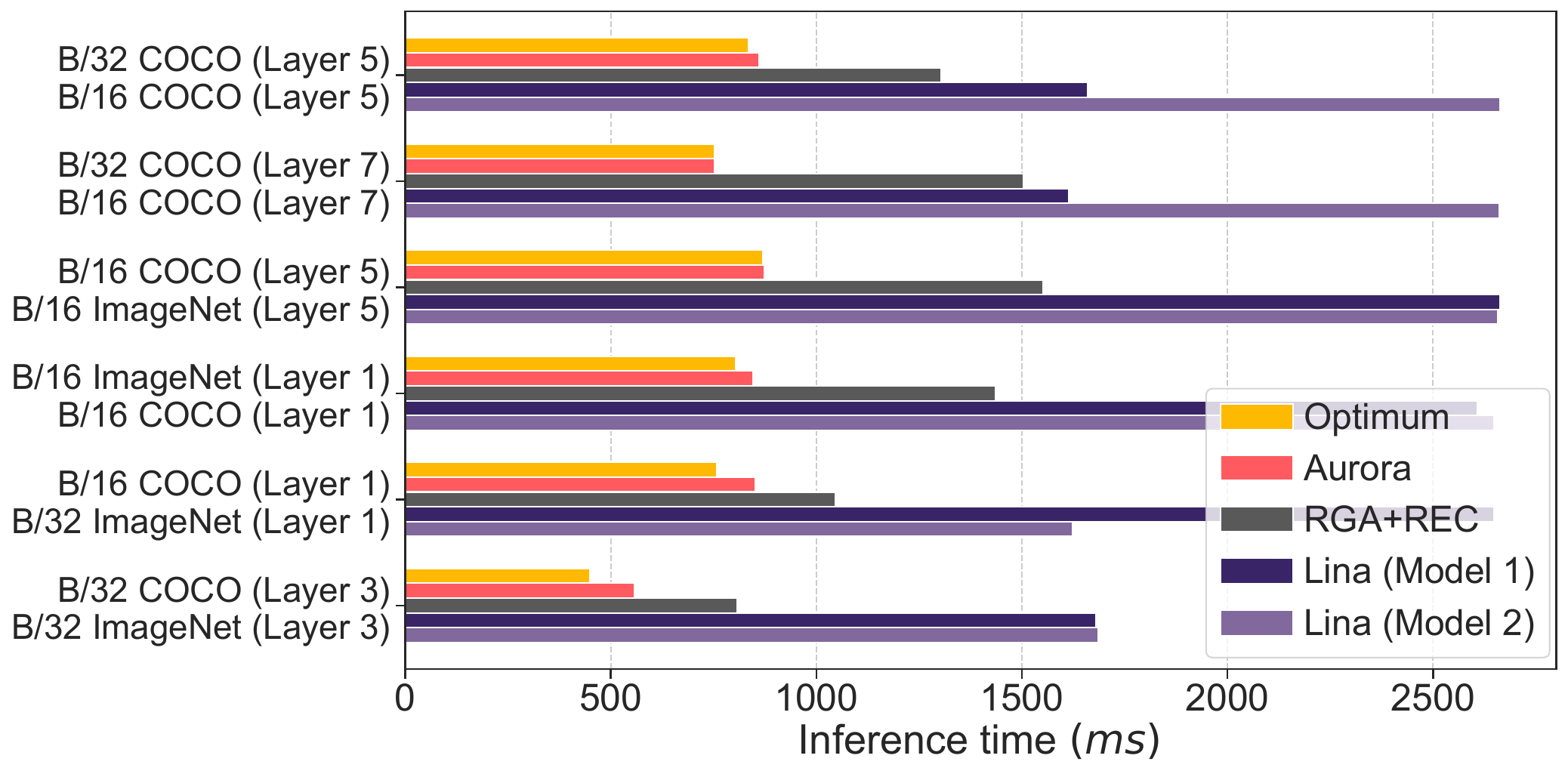}\vspace{-0.15in}
    \caption{Performance gap between \name and the optimum in the Colocating + Heterogeneous scenario.}\label{fig:gap_optimal}\vspace{-0.15in}
\end{wrapfigure}

\para{(Q3) \name realizes close performance to the optimal solution.} \name achieves minimal inference time in most scenarios, except for the Colocating + Heterogeneous case. Fig.~\ref{fig:gap_optimal} shows the inference time gap between \name and the optimum, obtained through brute-force search. On average, \name prolongs the inference time by only 1.07$\times$, which is a small difference given that it significantly outperforms other baseline methods.

\begin{figure*}[t]
    \begin{minipage}[t]{1\linewidth}
    \centering
        \begin{minipage}[t]{0.49\linewidth}
        \includegraphics[width=\linewidth]
        {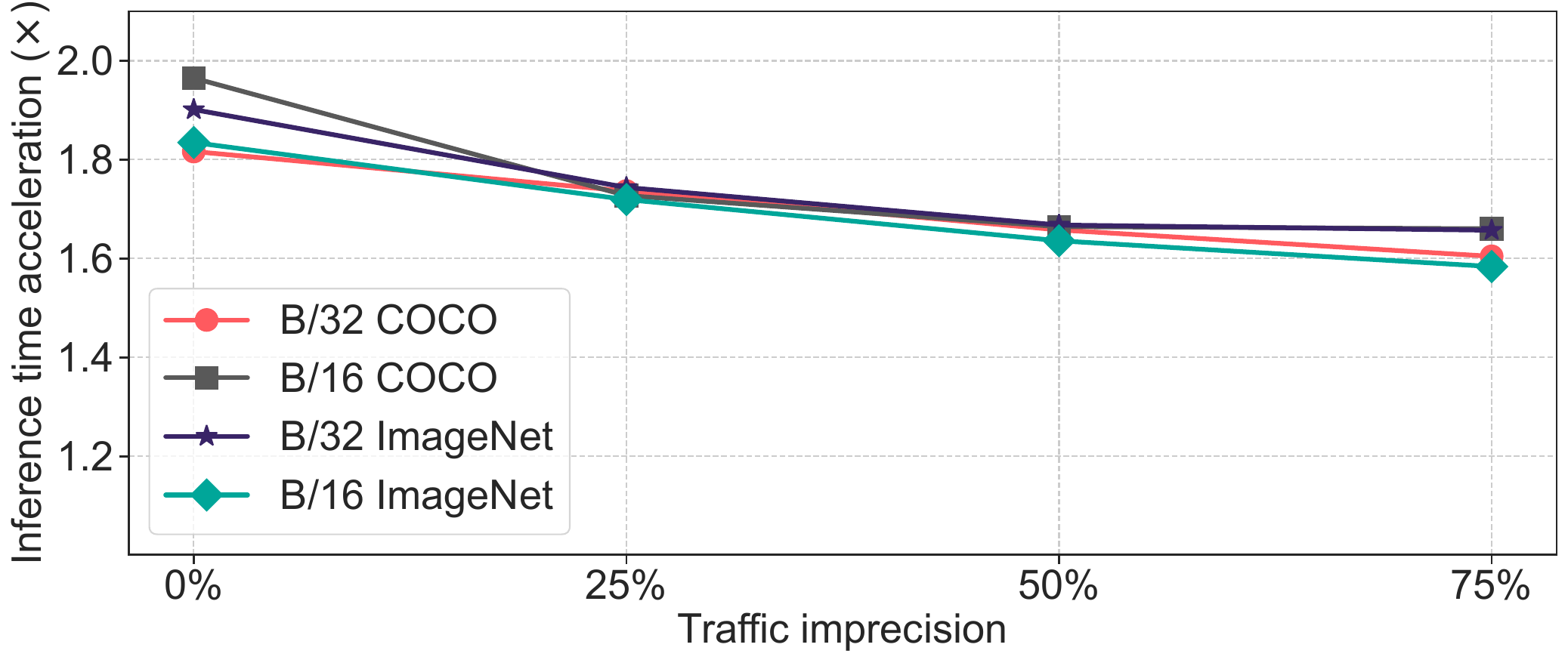}\vspace{-0.10in}
        \subcaption{}\label{fig:infer_mlayer_case2}
        \vspace{-0.15in}
        \end{minipage}%
        \hfill
        \begin{minipage}[t]{0.49\linewidth}
        \includegraphics[width=\linewidth]
        {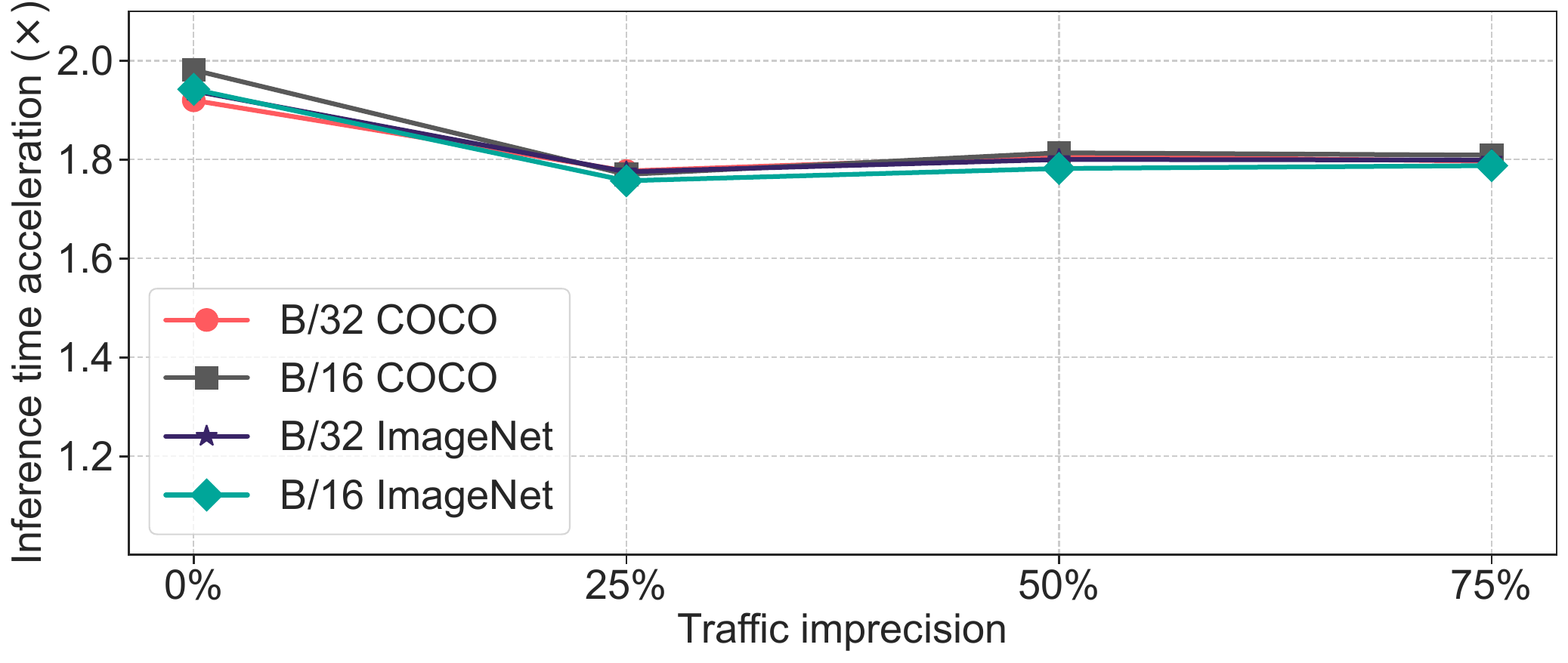}\vspace{-0.10in}
        \subcaption{}\label{fig:infer_mlayer_case4}
        \vspace{-0.15in}
        \end{minipage}%
    \caption{Inference time acceleration across different number of layers in the (a) Exclusive + Heterogeneous and (b) Colocating + Heterogeneous scenarios.}\label{fig:infer_mlayer}
    \end{minipage}
    \vspace{-0.3in}
\end{figure*}

\para{(Q4) \name maintains inference time performance under imprecise traffic inputs.}
Once \name's optimization plan is deployed, subsequent inference requests become unpredictable. We use the traffic matrix of the first layer for \name optimization and add traffic from the other three layers in the dataset as noise to simulate unpredictable requests. The level of imprecision ranges from 0\% to 75\% as traffic from each additional layer is incorporated.

Fig.~\ref{fig:infer_mlayer} shows the inference time acceleration of \name compared to RGA (Fig.~\ref{fig:infer_mlayer_case2}) and RGA+REC (Fig.~\ref{fig:infer_mlayer_case4}) across varying traffic imprecision. As expected, the inference time reduction generally decreases slightly as the traffic matrix becomes more imprecise. As expected, inference time reduction decreases slightly with increased traffic matrix imprecision. In the Exclusive+Heterogeneous scenario, acceleration drops from approximately 1.90$\times$ with precise traffic to 1.60$\times$ with 75\% imprecision. Similarly, in the Colocating+Heterogeneous scenario, acceleration decreases from about 2.0$\times$ to 1.80$\times$. The maximum performance degradation is 15.8\% with high noise traffic, demonstrating that \name still achieves significant inference time improvements even with imprecise inputs.

\vspace{-0.15in}
\section{Related Work}\label{sec:related_work}

\para{Load balancing.} Various gating methods have been proposed to ensure even token distribution~\cite{fedus2022switch, lepikhin2020gshard, riquelme2021scaling, he2022fastermoe, chen2022ta}. Some use an auxiliary loss function to penalize imbalances~\cite{shazeer2017outrageously, hwang2023tutel}, while others regulate expert capacity~\cite{child2019generating, hwang2023tutel}. Dynamic-Gating \moe~\cite{huang2023towards} allows experts to process a variable number of tokens, and Pre-Gated \moe~\cite{hwang2023pre} predicts token distribution based on the previous layer's gate. However, these methods can impede model convergence and degrade overall quality.

\para{All-to-all acceleration.} All-to-all communication is a key bottleneck in MoE model training and inference~\cite{rajbhandari2022deepspeed, he2022fastermoe}. To improve efficiency, Faster-MoE~\cite{he2022fastermoe} uses a pairwise exchange algorithm, Tutel~\cite{hwang2023tutel} introduces hierarchical strategies, and DeepSpeed-MoE~\cite{rajbhandari2022deepspeed} employs tensor parallelism and slicing. Fast-MoE~\cite{he2021fastmoe} also uses tensor slicing and data parallelism. However, these methods mainly focus on communication speed, neglecting GPU utilization and system heterogeneity.

\para{Expert colocation and replication.} Some solutions improve GPU utilization by colocating multiple experts from the same job. Lina~\cite{li2023accelerating} packs experts on a single GPU to reduce all-to-all transfer sizes, while Dynamic-Gating \moe~\cite{huang2023towards} offloads less-used experts to CPU memory. Other methods replicate popular experts across GPUs. FlexMoE~\cite{Flexmoe} dynamically shifts experts based on workload, and Prophet~\cite{Prophet} maps experts to specific GPU subsets. Lazarus~\cite{Lazarus} uses expert deployment and replication to enhance training during GPU failures. However, these approaches do not address copacking experts from different models or solving all-to-all communication challenges.

\para{GPU heterogeneity.} GPU heterogeneity is becoming more common in production clusters~\cite{mlaas} and has drawn significant academic interest~\cite{gavel, sia, Hap, heterog}. While these solutions improve the management of heterogeneous GPU clusters, they do not specifically address \moe models.

\para{Flow scheduling.} Some existing research~\cite{echelonflow, BlueConnect, ATP, PLink, MLfabric} attempts to model various training paradigms and optimize flow scheduling to speed up the process. However, these approaches often overlook the specific advantages of all-to-all communication.

\para{GPU sharing.} Experts colocated on the same device share GPU resources, and GPU sharing has been extensively explored in previous research~\cite{Gandiva, Salus, Antman, Wavelet}. Even though these solutions do not explore \moe models, the engineering techniques they employ can help reduce overhead when multiple experts share GPU resources alternately.

\vspace{-0.1in}
\section{Conclusion}\label{sec:conclusion}

In conclusion, \name effectively addresses key challenges in \moe inference by optimizing model deployment and communication scheduling. While this work marks an important first step, it opens up several promising avenues for future research. One direction is extending \name to handle more complex environments, including those with varying network topologies and communication protocols. Another potential enhancement involves developing adaptive strategies that dynamically adjust model deployment and communication scheduling based on changing workloads, which could further improve performance. Additionally, integrating \name with other optimization techniques, such as job scheduling and network topology design, may provide further synergistic benefits. These efforts aim to improve the scalability and efficiency of \moe models in increasingly diverse and demanding computing environments.

\clearpage

\balance

\bibliographystyle{ACM-Reference-Format}
\bibliography{moe}

\clearpage
\section*{Appendix}
\appendix

\section{Proof of Theorem~\ref{thm:exclusive_homo_comm_time}}\label{appendix:theorem1}

Theorem~\ref{thm:exclusive_homo_comm_time} states that the minimum communication time with the traffic matrix $\mathbb{D}$ is given by $b_{max}$ = $max(\sum_{j=1}^{n}d_{ij},\; \sum_{i=1}^{n}d_{ij}) / B$. Here, $d_{ij}$ is the element located at row $i$ and column $j$ in $\mathbb{D}$, and $B$ denotes the bandwidth for each homogeneous GPU.

For simplicity, we set bandwidth $B$ to 1. Our approach unfolds in three key steps. Initially, we illustrate the conversion of the traffic matrix $\mathbb{D}$ into $\mathbb{D'}$ by incorporating matrix $\mathbb{X}$. Subsequently, we prove that the minimum communication time for $\mathbb{D'}$ is $b_{max}$. Finally, we prove the existence of a non-negative $\mathbb{X}$.

\vspace{2mm}
\para{\textbf{1. Convert $\mathbb{D}$ to $\mathbb{D'}$ by adding non-negative $\mathbb{X}$}}

\begin{table*}[htbp]
\centering
\begin{minipage}{\textwidth}
\begin{equation}\label{eqn:add_x}
\begin{bNiceMatrix}[last-row, last-col]
d_{11} & d_{12} & ... & d_{1n}  & b_{1}\\
d_{21} & d_{22} & ... & d_{2n}  & b_{2}\\
... & ... & ... & ...           & \vdots\\
d_{n1} & d_{n2} & ... & d_{nn}  & b_{n}\\
b_{n+1} & b_{n+2} & ... & b_{2n}  \\
\end{bNiceMatrix}  + \;
\begin{bNiceMatrix}[last-row, last-col]
x_{11} & x_{12} & ... & x_{1n}  & \Delta b_{1}\\
x_{21} & x_{22} & ... & x_{2n}  & \Delta b_{2}\\
... & ... & ... & ...           & \vdots\\
x_{n1} & x_{n2} & ... & x_{nn}  & \Delta b_{n}\\
\Delta b_{n+1} & \Delta b_{n+2} & ... & \Delta b_{2n}  \\
\end{bNiceMatrix}  = 
\begin{bNiceMatrix}[last-row, last-col]
d'_{11} & d'_{12} & ... & d'_{1n}  & b_{max}\\
d'_{21} & d'_{22} & ... & d'_{2n}  & b_{max}\\
... & ... & ... & ...           & \vdots\\
d'_{n1} & d'_{n2} & ... & d'_{nn}  & b_{max}\\
b_{max} & b_{max} & ... & b_{max}  \\
\end{bNiceMatrix}
\end{equation}
\end{minipage}
\end{table*}

Eqn.~\ref{eqn:add_x} illustrates the relationship $\mathbb{D}$ + $\mathbb{X}$ = $\mathbb{D'}$, where $x_{ij}$ and $d'_{ij}$ denote elements located at row $i$ and column $j$ in $\mathbb{X}$ and $\mathbb{D'}$, respectively. Values external to the matrices, such as $b_{1}$, $\Delta b_{1}$, and $b_{max}$, represent the sum of their corresponding columns or rows.  For the traffic matrix \(\mathbb{D'}\), the conditions are met such that the sum of each row $\sum_{j=1}^{n}d'_{ij} = b_{max}$ and the sum of each column $\sum_{i=1}^{n}d'_{ij} = b_{max}$. That is to say, each GPU is sending and receiving precisely $b_{max}$ traffic after adding artificial traffic matrix $\mathbb{X}$. As shown in Fig.~\ref{fig:theorem_1}(a), each GPU sends and receives $b_{max}$ traffic in total. The label attached to each traffic entry indicates the target GPU to which the traffic is directed.

\vspace{2mm}
\para{\textbf{2. Prove the minimum communication time for $\mathbb{D'}$ is $b_{max}$}}

\begin{wrapfigure}{r}{0.5\textwidth}
    \centering
    \includegraphics[width=1\linewidth]{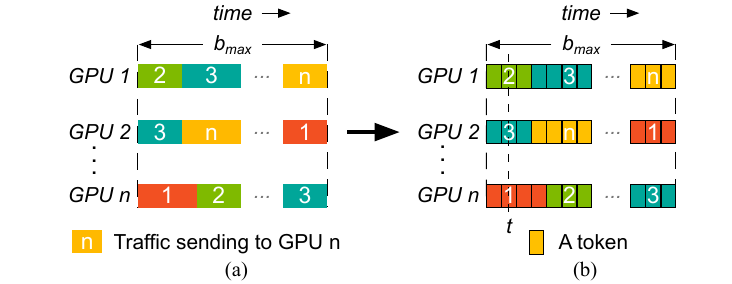}%
    \caption{(a) Each GPU sends/receives $b_{max}$ traffic in total. (b) Each GPU receives only one token at a time.}\label{fig:theorem_1}
    \vspace{-0.05in}
\end{wrapfigure}

Now, our attention shifts to determining the minimum communication time for  $\mathbb{D'}$. To establish that the minimum communication time for $\mathbb{D'}$ is indeed $b_{max}$, it is imperative to demonstrate that each GPU is capable of transmitting and receiving traffic without any interruptions, until all the traffic is completely finished. Any interruption would necessarily result in a communication time exceeding $b_{max}$, given that each GPU is expected to both send and receive a total of $b_{max}$ traffic.

As depicted in Fig.~\ref{fig:theorem_1}(b), one \textit{time slot} is required to transmit a token at full bandwidth. At any given time slot $t$, each GPU is configured to transmit just one token at its full bandwidth. As a result, each GPU can only receive one token at the same time. This is attributed to the parity in bandwidth between the sending and receiving sides, where the receiving side cannot simultaneously accommodate two tokens sent by two GPUs at full bandwidth.

We then proceed to establish that each GPU can transmit and receive tokens without any disruptions until all tokens are completely delivered. At time slot $t$, we can identify the presence of $n$ tokens, one originating from each GPU, with each destined for a distinct GPU among the $n$ GPUs. These $n$ tokens are systematically labeled from 1 to $n$. To verify this, we employ a proof by contradiction. We assume the hypothetical scenario where no token is directed to GPU $i$ (i.e., a token labeled with $i$) at time slot $t$. This assumption leads to the conclusion that the receiving traffic of GPU $i$ cannot reach the stipulated value of $b_{max}$. This, however, contradicts the requirement that each GPU must receive $b_{max}$ traffic under the traffic matrix $\mathbb{D'}$. Given the presence of $n$ tokens heading to $n$ distinct GPUs, these tokens can be transmitted without any contention during time slot $t$. This process can be iterated until all tokens are successfully transmitted. In other words, under the traffic matrix $\mathbb{D'}$, all GPUs participate in a seamless exchange of traffic without any interruptions. The minimum communication time for $\mathbb{D'}$ is firmly established as $b_{max}$.

\begin{equation}\label{eqn:equation_x}
\begin{aligned}
\left\{\begin{array}{cll}
     x_{11} + x_{12} + ... + x_{1n}  &= b_{max}-b_{1} &= \Delta b_{1}\\
     x_{21} + x_{22} + ... + x_{2n}  &= b_{max}-b_{2} &= \Delta b_{2}\\
     \vdots & \vdots & \vdots\\
     x_{n1} + x_{n2} + ... + x_{nn}  &= b_{max}-b_{n} &= \Delta b_{n}\\
     x_{11} + x_{21} + ... + x_{n1}  &= b_{max}-b_{n+1} &= \Delta b_{n+1}\\
     \vdots & \vdots & \vdots\\
     x_{1n} + x_{2n} + ... + x_{nn}  &= b_{max}-b_{2n} &= \Delta b_{2n}\\
\end{array}\right.
\end{aligned}
\end{equation}

\begin{equation}\label{eqn:equation_x_matrix}
\begin{bNiceMatrix}[first-row, first-col]
& 1 & 2 & \cdots & n & n+1 & \cdots & n^2 &\\
1 & 1 & 1 & \cdots & 1 &0 &\cdots &0\\
2 & 0 & 0 & \cdots & 0 &1 &\cdots &0\\
\vdots & \cdots & \cdots & \cdots &\cdots &\cdots &\cdots &\cdots\\
n & 0 & 0 & \cdots & 0 &0 &\cdots &1\\
 & 1 & 0 & \cdots & 0 &1 &\cdots &0\\
\vdots & \cdots & \cdots & \cdots &\cdots &\cdots &\cdots& \cdots\\
2n & 0 & 0 & \cdots & 1 &0 &\cdots&1\\
\end{bNiceMatrix}  
\begin{bNiceMatrix}
x_{11}\\
x_{12}\\
x_{13}\\
\vdots \\
x_{1n}\\
x_{21}\\
\vdots \\
x_{nn}\\
\end{bNiceMatrix}  = 
\begin{bNiceMatrix}
\Delta b_{1}\\
\Delta b_{2}\\
\vdots \\
\Delta b_{n}\\
\vdots \\
\Delta b_{2n}\\
\end{bNiceMatrix}
\end{equation}

\vspace{2mm}
\para{\textbf{3. Prove the existence of non-negative $\mathbb{X}$}}

In this step, we need to prove the presence of a non-negative matrix $\mathbb{X}$. The existence of a non-negative $\mathbb{X}$ is a critical factor in ensuring that the minimum communication time of $\mathbb{D}$ does not exceed that of $\mathbb{D'}$.

Eqn.~\ref{eqn:equation_x} presents the equations that elements of $\mathbb{X}$ should satisfy. We transform the $n \times n$ matrix $\mathbb{X}$ to an $n^{2} \times 1$ vector $\mathbf{x}$. Then we express these equations in matrix format as $\mathbb{A} \mathbf{x} = \Delta \mathbf{b}$, which is shown in Eqn.~\ref{eqn:equation_x_matrix}. The size of $\mathbb{A}$ is $2n \times n^2$. Notably, it is apparent that for every $\Delta b_{i} \in $ $\Delta \mathbf{b}$, we have $\Delta b_{i} \geq 0$ since $b_{max} \geq b_{i}$. Next, we use Farkas' Lemma~\cite{farkas} to prove the existence of a non-negative solution $\mathbf{x}$.

\vspace{2mm}
\textit{Farkas' Lemma~\cite{farkas}: Let $\mathbb{A} \in \mathbb{R}^{m\times n}$ and $\mathbf{b} \in \mathbb{R}^{m}$. Then exactly one of the following two assertions is true:\\
\indent \indent 1. There exists an $\mathbf{x} \in \mathbb{R}^{n}$ such that $\mathbb{A} \mathbf{x} =\mathbf {b}$ and $\mathbf {x} \geq 0$.\\
\indent \indent 2. There exists a $\mathbf{y} \in \mathbb{R}^{m}$ such that $\mathbb{A}^{\mathsf{T}}\mathbf{y} \geq 0$ and $\mathbf{b} ^{\mathsf{T}}\mathbf{y} <0$.\\}
\vspace{0mm}

Here, the notation $\mathbf {x} \geq 0$ means that all components of the vector $\mathbf{x}$ are non-negative.

\begin{equation}\label{eqn:equation_y}
\begin{aligned}
\left\{\begin{array}{cll}
     y_{1} + y_{n+1}  \geq 0 \\
     y_{1} + y_{n+2}  \geq 0 \\
       \vdots \\
     y_{1} + y_{2n}  \geq 0 \\
\end{array}\right. 
\left\{\begin{array}{cll}
     y_{2} + y_{n+1}  \geq 0 \\
     y_{2} + y_{n+2}  \geq 0 \\
      \vdots \\
     y_{2} + y_{2n}  \geq 0 \\
\end{array}\right.
 \vdots 
\left\{\begin{array}{cll}
     y_{n} + y_{n+1}  \geq 0 \\
     y_{n} + y_{n+2}  \geq 0 \\
       \vdots \\
     y_{n} + y_{2n}  \geq 0 \\
\end{array}\right.
\end{aligned}
\end{equation}

\begin{equation}\label{eqn:y_min}
\begin{aligned}
\left\{\begin{array}{ccc}
     y_{n+1} &\geq -y_{1}, -y_{2},\cdots, -y_{n}  \\
     y_{n+2} &\geq -y_{1}, -y_{2},\cdots, -y_{n}  \\
     \vdots & \vdots \\
     y_{2n} &\geq -y_{1}, -y_{2},\cdots, -y_{n}  \\
\end{array}\right.  \Rightarrow 
\left\{\begin{array}{ccc}
     y_{n+1} &\geq -y_{min}  \\
     y_{n+2} &\geq -y_{min} \\
     \vdots & \vdots \\
     y_{2n} &\geq -y_{min}  \\
\end{array}\right.
\end{aligned}
\end{equation}

Assertion 1 aligns precisely with our objective. To affirm Assertion 1, we need to disprove Assertion 2. This can be achieved through a proof by contradiction. Assume Assertion 2 is true: there exists a $\mathbf{y}$ with size $2n \times 1$ such that $\mathbb{A}^{\mathsf{T}}\mathbf{y} \geq 0$ and $\mathbf{b}^{\mathsf{T}}\mathbf{y} <0$. By applying $\mathbb{A}^{\mathsf{T}}\mathbf{y} \geq 0$, we derive the inequalities as shown in Eqn.~\ref{eqn:equation_y}.

Assume $y_{min} = min(y_{1}, y_{2}, ... , y_{n})$, we have Eqn.~\ref{eqn:y_min}. Then we calculate the value of $\Delta \mathbf{b}^{\mathsf{T}}\mathbf{y}$.

\begin{equation}\label{eqn:contradiction}
\begin{aligned}
\Delta \mathbf{b}^{\mathsf{T}}\mathbf{y} &= \Delta b_{1}y_{1} + \cdots + \Delta b_{n}y_{n}+ \Delta b_{n+1}y_{n+1} \\
& \quad + \cdots+ \Delta b_{2n}y_{2n} \\
&\geq \Delta b_{1}y_{min} + \cdots+ \Delta b_{n}y_{min} + \Delta b_{n+1}(-y_{min}) \\
& \quad + \cdots + \Delta b_{2n}(-y_{min}) \\
&= y_{min}((\Delta b_{1} + \cdots \Delta b_{n}) - (\Delta b_{n+1} + \cdots + \Delta b_{2n})) \\
&=0
\end{aligned}
\end{equation}

From Eqn.~\ref{eqn:contradiction}, we know $\Delta \mathbf{b}^{\mathsf{T}}\mathbf{y} \geq 0$. This contradicts $\Delta \mathbf{b}^{\mathsf{T}}\mathbf{y} < 0$ in Assertion 2, proving that Assertion 2 is incorrect. As a result, we can establish the existence of a non-negative solution $\mathbf{x}$, and this, in turn, confirms the presence of a non-negative matrix~$\mathbb{X}$.

\section{Proof of Theorem~\ref{thm:exclusive_hetero_comm_time}}\label{appendix:theorem2}

Theorem~\ref{thm:exclusive_hetero_comm_time} demonstrates that the minimum communication time with traffic matrix $\mathbb{D}$ is $b_{max}$ = $max(\sum_{j=1}^{n}d_{ij}/ B_i,\; \sum_{i=1}^{n}d_{ij}/ B_i)$, where $d_{ij}$ is the element located at row $i$ and column $j$ in $\mathbb{D}$, $B_i$ is the bandwidth of GPU $i$.

Similar to the proof of Theorem~\ref{thm:exclusive_homo_comm_time}, the approach unfolds in three steps. Initially, we convert the traffic matrix $\mathbb{D}$ into $\mathbb{D'}$ using matrix $\mathbb{X}$. Next, we demonstrate that the minimum communication time for $\mathbb{D'}$ is $b_{max}$. Finally, we prove the existence of a non-negative $\mathbb{X}$. The key difference is that the network bandwidth $B$ cannot be simplified to 1 in a heterogeneous environment. This distinction must be incorporated into the proof.

\vspace{2mm}
\para{\textbf{1. Convert $\mathbb{D}$ to $\mathbb{D'}$ by adding non-negative $\mathbb{X}$}}

\begin{table*}[htbp]
\centering
\begin{minipage}{\textwidth}
\begin{equation}\label{eqn:add_x_2}
\begin{aligned}
\begin{bNiceMatrix}[last-row, last-col]
\frac{d_{11}}{B_{1}} & \frac{d_{12}}{B_{1}} & \cdots & \frac{d_{1n}}{B_{1}} & b_{1}\\
\frac{d_{21}}{B_{2}} & \frac{d_{22}}{B_{2}} & \cdots & \frac{d_{2n}}{B_{2}} & b_{2}\\
\vdots & \vdots & \ddots & \vdots & \vdots\\
\frac{d_{n1}}{B_{n}} & \frac{d_{n2}}{B_{n}} & \cdots & \frac{d_{nn}}{B_{n}} & b_{n}\\
b_{n+1} & b_{n+2} & \cdots & b_{2n} \\
\end{bNiceMatrix}
+ 
\begin{bNiceMatrix}[last-row, last-col]
\frac{x_{11}}{B_{1}} & \frac{x_{12}}{B_{1}} & \cdots & \frac{x_{1n}}{B_{1}} & \Delta b_{1}\\
\frac{x_{21}}{B_{2}} & \frac{x_{22}}{B_{2}} & \cdots & \frac{x_{2n}}{B_{2}} & \Delta b_{2}\\
\vdots & \vdots & \ddots & \vdots & \vdots\\
\frac{x_{n1}}{B_{n}} & \frac{x_{n2}}{B_{n}} & \cdots & \frac{x_{nn}}{B_{n}} & \Delta b_{n}\\
\Delta b_{n+1} & \Delta b_{n+2} & \cdots & \Delta b_{2n} \\
\end{bNiceMatrix}
= 
\begin{bNiceMatrix}[last-row, last-col]
\frac{d'_{11}}{B_{1}} & \frac{d'_{12}}{B_{1}} & \cdots & \frac{d'_{1n}}{B_{1}} & b_{max}\\
\frac{d'_{21}}{B_{2}} & \frac{d'_{22}}{B_{2}} & \cdots & \frac{d'_{2n}}{B_{2}} & b_{max}\\
\vdots & \vdots & \ddots & \vdots & \vdots\\
\frac{d'_{n1}}{B_{n}} & \frac{d'_{n2}}{B_{n}} & \cdots & \frac{d'_{nn}}{B_{n}} & b_{max}\\
b_{max} & b_{max} & \cdots & b_{max} \\
\end{bNiceMatrix}
\end{aligned}
\end{equation}
\end{minipage}
\end{table*}

To address the difference, we modify the element $d_{ij}$ in $\mathbb{D}$ to $d_{ij} /\min(B_i, B_j) $, as indicated in Eqn.~\ref{eqn:add_x_2}. We apply the same adjustment to the elements in $\mathbb{X}$ and $\mathbb{D'}$. For clarity and consistency, we continue to refer to these updated matrices as $\mathbb{D}$, $\mathbb{X}$, and $\mathbb{D'}$.

Eqn.~\ref{eqn:add_x_2} illustrates the relationship $\mathbb{D}$ + $\mathbb{X}$ = $\mathbb{D'}$, where $x_{ij}/B_i$ and $d'_{ij}/B_i$ denote elements located at row $i$ and column $j$ in $\mathbb{X}$ and $\mathbb{D'}$, respectively. Values external to the matrices, such as $b_{1}$, $\Delta b_{1}$, and $b_{max}$, represent the sum of their corresponding columns or rows. For traffic matrix $\mathbb{D'}$, it satisfies the conditions that for each row $\sum_{j=1}^{n}d'_{ij}/B_i = b_{max}$, and for each column $ \sum_{i=1}^{n}d'_{ij}/B_i = b_{max}$. That is to say, the time each GPU uses for sending and receiving is precisely $b_{max}$ after adding artificial matrix $\mathbb{X}$. 

\vspace{2mm}
\para{\textbf{2. Prove the minimum communication time for $\mathbb{D'}$ is $b_{max}$}}

To establish that the minimum communication time for $\mathbb{D'}$ is $b_{max}$, we must demonstrate that each GPU can continuously send and receive traffic until all traffic is completed. Similar to Theorem~\ref{thm:exclusive_homo_comm_time}, each token requires a time slot to transmit at full bandwidth. Due to bandwidth constraints, each GPU can only send and receive one token per slot. To prove this, assume by contradiction that any GPU does not receive a token in a given slot. This failure means it won't meet the $b_{max}$ requirement, contradicting the need for each GPU to receive $b_{max}$ traffic. Using the same method as in Theorem~\ref{thm:exclusive_homo_comm_time}, we confirm uninterrupted traffic exchange, thereby proving that $b_{max}$ is indeed the minimum communication time for $\mathbb{D'}$.

\vspace{2mm}
\para{\textbf{3. Prove the existence of non-negative $\mathbb{X}$}}

In this step, we need to prove the presence of a non-negative matrix $\mathbb{X}$. This step is exactly the same as the one in Appx.~\ref{appendix:theorem1}. We can still use Farkas' Lemma~\cite{farkas} to prove the existence of a non-negative solution $\mathbf{x}$.

These three steps validate the correctness of Theorem~\ref{thm:exclusive_hetero_comm_time}.

\end{document}